\newlength{\oldfb}
\newcommand{\boxx}[1]
{\setlength{\oldfb}{\fboxrule}\setlength{\fboxrule}{2pt}\framebox{\parbox{\dimexpr\linewidth-2\fboxsep-2\fboxrule}{#1}}\setlength{\fboxrule}{\oldfb}\\}
\newcommand{\defeq}{\vcentcolon=}
\renewcommand{\vec}[1]{\overline{#1}}
\spnewtheorem{thm}{Theorem}[section]{\bfseries}{\itshape}
\spnewtheorem{pro}[thm]{Proposition}{\bfseries}{\itshape}
\spnewtheorem{lem}[thm]{Lemma}{\bfseries}{\itshape}
\spnewtheorem{defi}[thm]{Definition}{\bfseries}{\itshape}
\spnewtheorem{rem}[thm]{Remark}{\bfseries}{\itshape}
\newcommand{\V}[1]{\vec{#1}}
\DeclareMathOperator{\tr}{tr}
\DeclareMathOperator{\Id}{Id}
\newcommand{\R}{\mathbb{R}}
\newcommand{\Th}{T_h}
\newcommand{\OTh}{{\cal{T}}_h}
\newcommand{\Oo}{\mathcal{O}}
\newcommand{\Tg}{T_g}
\newcommand{\OTg}{{\cal{T}}_g}
\newcommand{\OTH}{\mathcal{T}_H}
\def\calD{\mathcal{D}}
\def\calT{\mathcal{T}}
\newcommand{\La}{\Lambda}
\newcommand{\Lai}{\Lambda^{(i)}}
\newcommand{\la}{\lambda}
\newcommand{\f}{f}
\def \R {{\mathbb R}} 
\def \Tg {\mathcal{T}_g}
\def \V {V}
\def \Vk {V^k}
\title{Inability of spatial transformations of CNN feature maps to support invariant recognition}
\titlerunning{~}
\authorrunning{~}
\author{Ylva Jansson \and Maksim Maydanskiy \and Lukas Finnveden \and Tony Lindeberg}
\institute{Computational Brain Science Lab \\ Division of Computational Science and Technology\\ 
	KTH Royal Institute of Technology\\
	Stockholm, Sweden 
}
\begin{document}
	
\maketitle

\begin{abstract}
A large number of deep learning architectures use spatial transformations of CNN feature maps or filters to better deal with variability in object appearance caused by natural image transformations.
In this paper, we prove that spatial transformations of CNN \emph{feature maps} cannot align the feature maps of a transformed image to match those of its original, for general affine transformations, unless the extracted features are \emph{themselves invariant}.
Our proof is based on elementary analysis for both the single- and multi-layer network case. 
  The results imply that methods based on spatial transformations of CNN feature maps or filters cannot replace image alignment of the input and \emph{cannot enable invariant recognition} for general affine transformations, specifically not for scaling transformations or shear transformations. 
For rotations and reflections, spatially transforming feature maps or filters can enable invariance but only for networks with learnt or hardcoded rotation- or reflection-invariant features.

\end{abstract}

\section{Introduction}
Convolutional neural networks (CNNs) that are \emph{invariant} to certain groups of image transformations have fewer parameters, can learn from smaller datasets and enable \emph{generalization outside the training distribution}. 
A number of current methods use spatial transformations of CNN feature maps or filters to enhance the ability of CNNs to handle different types of image transformations \cite{ChoGwaSavSil-NIPS2016,LiCheCaiDav-arXiv2017,KimLinJeoMin-NIPS2018,zheng2018pedestrian,HeZhaXia-ECCV2014,yuarXiv2015,DaiQiXio-arXiv2017,JadSimZisKav-NIPS2015}.
For example, \emph{spatial transformer networks} (STNs) \cite{JadSimZisKav-NIPS2015} were designed to enable CNNs to learn invariance to image transformations by transforming \emph{CNN feature maps} as well as input images. 
Clearly, if a network learns to align transformed input images to a common pose, this can enable invariant recognition. The original work \cite{JadSimZisKav-NIPS2015}, however, simultaneously claims the ability of STNs to learn invariance from data and that the spatial transformer layers (STs) can be inserted into the network ``anywhere" (i.e. at any depth). 
There is no mention of whether the key motivation for the framework - the ability to learn invariance - is still supported when transforming feature maps deeper in the network. 

This seems to have left some confusion about whether spatially transforming CNN feature maps can support invariant recognition. A number of subsequent works advocate image alignment by \emph{transforming feature maps} \cite{ChoGwaSavSil-NIPS2016,LiCheCaiDav-arXiv2017,KimLinJeoMin-NIPS2018,zheng2018pedestrian}, including e.g. pose alignment of pedestrians \cite{zheng2018pedestrian} and use of a spatial transformer to mimic the kind of patch normalization done in SIFT \cite{ChoGwaSavSil-NIPS2016}. 
Other commonly used methods that are based on transforming CNN feature maps or filters are spatial pyramid pooling \cite{HeZhaXia-ECCV2014}, dilated convolutions \cite{yuarXiv2015} and deformable convolutions \cite{DaiQiXio-arXiv2017}. Such methods are often motivated by the need for CNNs to better deal with variability in object pose. 
There is, however, no discussion about the difference between pose normalizing the input image and spatially transforming feature maps, or the implications this choice has for the ability to achieve e.g. affine or scale invariance \cite{HeZhaXia-ECCV2014,yuarXiv2015,DaiQiXio-arXiv2017,JadSimZisKav-NIPS2015}.

Here, we elucidate under what conditions it is possible to achieve invariance to affine image transformations by means of \emph{purely spatial transformations} of CNN feature maps. These conditions turn out to be very restrictive, implying network filters or features that are \emph{already invariant} to the relevant image transformations. 
This implies that spatial transformations of CNN feature maps
\emph{cannot}, in general, align the feature maps of a transformed image with those of an original and thus not enable affine-invariant recognition. The exception is translations, where the translation covariance of CNNs does imply that translations and feature extraction do commute. 

We do not claim much mathematical novelty of these facts, which are in some sense intuitive, and, in the single-layer case, have some parallels with the work in \cite{cohen2016group} and \cite{CohGeiWei-NIPS2019}. 
Our contribution is to present an alternative proof based on elementary analysis for the special case of purely spatial transformations of CNN feature maps (as opposed to more general transformations that might mix information between the different feature channels). Since we only consider spatial transformations, we can give a more direct proof. 
We also provide an analysis of the general multi-layer case, without relying on any covariance assumptions about the  individual layers. 

 Our results have straightforward implications for STNs and other methods that perform spatial transformations of CNN feature maps or filters. 
An experimental evaluation of the practical consequences of our result in the context of \emph{spatial transformer networks}, together with a short intuitive version of the proof presented here, has been presented in \cite{FinJanLin-arXiv2020}. 

\section{Preliminaries}

\subsection{Images and image transformations}
We work with a continuous model of the image space. We consider both an \textbf{image} $\f$ and a convolutional \textbf{filter} $\la$ to be a map from $\R^N$ to $\R$. We use notation  $V$ for the function space to which the images $f$ belong, and $\Vk$ for the space of maps that have each of their $k$ components in $V$. We are somewhat lax about specifically what class of functions $\lambda$ and $f$ should belong to. We need that the convolution operator
\begin{equation}
\La_\la f(x) \defeq  (f\star \la)(x)=\int_{\R^N} f(y)\la(x-y) dy=\int_{\R^N} \la(y) f(x-y) dy
\end{equation}
is defined and has output that lies in the same space, and that applying a Lipshitz continuous point-wise non-linearity $\sigma$ to an image also produces an image in the same space. This will hold for example if $\la$ are integrable and compactly supported (we'll write $\la \in L^1_{comp}$) and the images $f$ are locally integrable ($f\in L^1_{loc}$). Hence, when necessary we will assume $V$ to be the space of locally integrable functions (with the corresponding $L^1_{loc}$ topology). To avoid possible confusion, we denote the zero function by $0$ and the point $0 \in \R^N$ by $\overline{0}$.

\subsection{Continuous model of a CNN}\label{sec:CNN}

Let $\La:  V \to V^{M_k}$ denote a \emph{continuous CNN} with $k$ layers and $M_k$ feature channels in the final layer 
and let $\theta^{(i)}$ represent the transformation between layers $i-1$ and $i$ such that
\begin{equation}
(\Lambda f)_c(x) = (\theta^{(k)} \theta^{(k-1)} \cdots \theta^{(2)} \theta^{(1)} f)_c(x)
\label{eq:phi-def},
\end{equation}
where $c \in \{1,2, \dots M_k\}$ denotes the feature channel. Let further $\Lambda^{(i)} f$ refer to the output from layer $i$ (with $M_i$ feature channels and $\La^{(0)} f = f$)
\begin{align}
\Lai f &= \theta^{(i)} \theta^{(i-1)} \cdots \theta^{(2)} \theta^{(1)} f. 
\label{eq:phi_i-def}
\end{align} 
We model the transformation $\theta^{(i)}$ between two adjancent layers $\Lambda^{(i-1)}f$ and $\Lambda^{(i)}f$ as a convolution followed by the addition of a bias term $b_{i,c} \in \R$ and the application of a pointwise non-linearity $\sigma_i:\R \to \R$:
\begin{multline}
(\Lai f)_c (x)=\sigma_i \left( \sum_{m=1}^{M_{i-1}} \int_{y \in \R^N } (\Lambda^{(i-1)}f)_m (x-y)\, \lambda^{(i)}_{m,c}(y) \, dy + b_{i,c}
\right),
\label{eq:CNN}
\end{multline}
where $\lambda^{(i)}_{m,c} \in L^1_{comp}$
denotes the convolution kernel that propagates information from feature channel $m$ in layer $i-1$ to output feature channel $c$ in layer $i$. 
A final fully connected classification layer with compact support can also be modelled as a convolution combined with a non-linearity $\sigma_k$ that represents \emph{a softmax operation} over the feature channels. 

We note that since a convolution with $\la \in L^1_{comp}$ is a continuous operator from $V$ to $V$ (recall that we are using $L^1_{loc}$ topology, so the continuity follows from the  $L^1$ norm inequality for convolutions, see \cite{stein2009real}, Chapter 2, Exercise 21 d), we conclude that when the $\sigma_i$s are Lipschitz continuous functions the resulting $\La:  V \to V^{M_k}$ is a continuous operator.

\subsection{Transformations of images and feature maps}
We will consider the group of \emph{affine image transformations}, which here correspond to a collection of linear maps\footnote{We are thus not interested in translations.} $\Th: \R^N\to \R^N$. 
For each such map, we have a corresponding operator $\OTh^k:V^k\to V^k$, defined by the ``contragradient" representation, that is by precomposing with $T_h^{-1}$, as follows:

\begin{defi}\label{def:op-Th}
We define $\OTh^k: V^k \to V^k$, first for input images, by setting

\begin{equation}\label{eg:Th-def}
(\OTh^1 f) (x)= f(T_h^{-1} x)
\end{equation} 
and then on feature maps as
\begin{equation}\label{eg:Th-def-k}
(\OTh^{k} \La f)_c (x)= (\La f)_c(T_h^{-1} x),
\end{equation} 
where $k$ denotes the number of feature channels.
\end{defi}

Note how this definition implies purely spatial transformations of feature maps. Although the $\OTh^k$'s are, technically, different operators for different values of $k$ we often refer to all these operators as $\OTh$ to simplify the notation.

\begin{defi}\label{def:op-D}
	We define the \textbf{translation operator} $\calD_{\delta}$, with  $\delta \in \R^N$ for input images by 
	\begin{equation}
	(\calD_\delta f) (x) = f(x - \delta)
	\end{equation}
	and then for feature maps by
	\begin{equation}
	(\calD_\delta^k \La f) (x) = (\La f)(x - \delta).
	\end{equation}
\end{defi} 

We will again use single notation $\calD_\delta$ for all operators $\calD_\delta^k:V^k\to V^k$.

\subsection{Invariance and covariance}
Consider a general (possibly non-linear) feature extractor $\Lambda: \V \to \Vk$ such as e.g. the continuous analog of a CNN described in Section \ref{sec:CNN}.

\begin{defi}\label{def:covariance}
	We define an operator $\Lambda$ to be \textbf{covariant} to an operator $\Oo$ if there exists an input independent operator $\Oo'$ such that we can express a communative relation over $\Lambda$ of the form (see also Figure \ref{fig:commdia})
	\begin{equation}
	\Lambda \Oo  f = \Oo' \Lambda f.
	\label{eq:covariance1}
	\end{equation}
	If such an operator exists and is in addition invertible, then  it is possible to ``undo" the action of $\Oo$ after feature extraction. 
	(In the invariant neural networks literature, covariance is also often referred to as equivariance.) 

\end{defi}

\begin{figure}[h]
	\[
	\begin{CD}
	{\Lambda} \, f @>{\Oo'}>> \Lambda \Oo f \\
	\Big\uparrow\vcenter{\rlap{$\scriptstyle{{\Lambda}}$}} & & \Big\uparrow\vcenter{\rlap{$\scriptstyle{{\Lambda}}$}} \\
	f @>{\Oo}>> {\Oo f} 
	\end{CD}
	\]
	\caption{Commutative diagram for a covariant feature extractor $\Lambda$.}
	\label{fig:commdia}
\end{figure}

We here consider operators $\OTh^k$ corresponding to 
affine transformations of the spatial image domain that do not mix information between the feature channels (Definition \ref{def:op-Th}), which leads us to study (restricted) covariance relations of the form: 	\begin{equation}
	\Lambda \calT_h f = (\OTg^{k})^{-1} \Lambda f.
	\label{eq:covariance2}
	\end{equation}
We ask the question if and under what  conditions such (restricted) covariance relations exist for CNNs.

\begin{defi}\label{def:translation-covariance}
	We define an operator $\La$ to be \textbf{translation covariant} if for every $\delta$ we have
	\begin{align}\label{eq:tr-covar} 
	\La \calD_{\delta}=\calD_{\delta} \La.
	\end{align}
\end{defi}

\begin{defi}\label{def:invariance}
	We define an operator $\Lambda$ to be \textbf{invariant} to an operator $\OTh$ if the feature representation of a transformed image is \emph{equal to} the feature representation of the original image 
	\begin{equation}
	\Lambda \calT_h f = \Lambda f 
	\label{eq:invariance}
	\end{equation}
	for all $f \in V$. If this is true for all $h$ in a transformation group $H$, we say that $\Lambda$ is invariant to $H$.
\end{defi}

\begin{lem}\label{lemma:conv-trans-covar} The convolution operator is translation covariant
	\begin{equation}\calD_{\delta}\La_\la  =\La_\la  \calD_{\delta} =\La_{\calD_{\delta}\lambda}.
	\end{equation}
\end{lem}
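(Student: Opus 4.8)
The plan is to verify all three equalities directly from the integral definition of the convolution operator together with Definition~\ref{def:op-D}, since the claim is essentially a bookkeeping statement resolved by a change of variables. First I would expand the left and middle expressions using the form $\La_\la f(x)=\int_{\R^N}\la(y)\,f(x-y)\,dy$. Computing the left-hand side gives
\[
(\calD_\delta \La_\la f)(x)=(\La_\la f)(x-\delta)=\int_{\R^N}\la(y)\,f(x-\delta-y)\,dy,
\]
while the middle expression gives
\[
(\La_\la \calD_\delta f)(x)=\int_{\R^N}\la(y)\,(\calD_\delta f)(x-y)\,dy=\int_{\R^N}\la(y)\,f(x-y-\delta)\,dy.
\]
Since $x-\delta-y=x-y-\delta$, the two integrands coincide for every $x$, so the identity $\calD_\delta\La_\la=\La_\la\calD_\delta$ holds with no further manipulation.

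For the remaining equality, involving the translated kernel $\calD_\delta\la$, I would start from
\[
\La_{\calD_\delta\la}f(x)=\int_{\R^N}(\calD_\delta\la)(y)\,f(x-y)\,dy=\int_{\R^N}\la(y-\delta)\,f(x-y)\,dy,
\]
and apply the substitution $z=y-\delta$. Because $dy=dz$ and the domain $\R^N$ is invariant under this shift, the integral becomes $\int_{\R^N}\la(z)\,f(x-\delta-z)\,dz$, which is exactly the expression obtained above for $(\calD_\delta \La_\la f)(x)$. Renaming the dummy variable then closes the chain of equalities.

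The only points requiring even minor care are that each integral is well defined and that the equalities are to be understood as identities in $V$. Both follow from the standing assumptions that $\la\in L^1_{comp}$ and $f\in L^1_{loc}$, which guarantee convergence of the convolution integrals and that $\La_\la$ maps $V$ into $V$, while the translation invariance of Lebesgue measure on $\R^N$ is what licenses the change of variables. I expect no genuine obstacle: the statement is elementary, and its content is simply that translating the output of a convolution, translating its input, and translating its kernel all have the same effect. This is the basic commutation fact we will invoke repeatedly when propagating transformations through successive layers of the network.
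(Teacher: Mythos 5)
Your proof is correct and follows essentially the same approach as the paper's: expand the convolution integral from the definition and use one change of variables, licensed by the translation invariance of Lebesgue measure. The only cosmetic difference is which of the two equivalent integral forms you expand and which pair of equalities you verify directly versus by substitution; the content is identical.
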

The proof is given in Appendix \ref{app:single-layer-covariance}.

\begin{pro}\label{prop:CNN-covar} A  CNN as defined in Section \ref{sec:CNN} is a translation-covariant operator.
\end{pro}

\begin{proof}[Sketch]
	Since each convolution operation is translation covariant by Lemma \ref{lemma:conv-trans-covar} and the nonlinearities act on the values returned as output from the convolutions, all the operators $\Lambda^{(i)}$ are  translation covariant. Formal proof is by induction on $i$ (see Appendix \ref{app:prop-CNN-proof}).
\end{proof}

\begin{lem} \label{lem:conv-lin}
	Translation and general linear operators (c.f. (\ref{eg:Th-def})) have the following commutation relation:
	
	\begin{equation} \label{eq:commutator1} \OTh \calD_\delta  = \calD_{(\Th \delta)} \OTh  
	\end{equation}
	or equivalently
	\begin{equation} \label{eq:commutator2} \calD_\delta \OTh = \OTh \calD_{(\Th^{-1} \delta)}.  
	\end{equation}
	
\end{lem}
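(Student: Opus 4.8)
The plan is to verify both identities by a direct pointwise computation, since $\OTh$ and $\calD_\delta$ are both defined by precomposition with an explicit map on $\R^N$. I would fix an arbitrary $f \in V$ and an arbitrary point $x \in \R^N$, and simply chase the definitions on each side of (\ref{eq:commutator1}), checking that the two resulting expressions coincide for every $x$.

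First I would compute the left-hand side. By Definition \ref{def:op-Th}, applying $\OTh$ evaluates its argument at $\Th^{-1} x$, so $(\OTh \calD_\delta f)(x) = (\calD_\delta f)(\Th^{-1} x)$; then by Definition \ref{def:op-D} the translation operator subtracts $\delta$ from the argument, giving $(\OTh \calD_\delta f)(x) = f(\Th^{-1} x - \delta)$.

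Next I would compute the right-hand side the same way: $(\calD_{(\Th \delta)} \OTh f)(x) = (\OTh f)(x - \Th\delta) = f\bigl(\Th^{-1}(x - \Th \delta)\bigr)$. The one substantive step lives here. Because $\Th$ is a \emph{linear} map, its inverse $\Th^{-1}$ is linear as well, so $\Th^{-1}(x - \Th\delta) = \Th^{-1} x - \Th^{-1}\Th\delta = \Th^{-1} x - \delta$. This matches the expression obtained for the left-hand side, establishing (\ref{eq:commutator1}) for all $f$ and all $x$, hence as an operator identity.

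Finally, (\ref{eq:commutator2}) follows immediately from (\ref{eq:commutator1}) by substituting $\delta \mapsto \Th^{-1}\delta$ and using $\Th\Th^{-1}\delta = \delta$. I do not expect any genuine obstacle, as the argument is a routine unwinding of definitions. The only place the hypothesis is actually used is the distribution $\Th^{-1}(x - \Th\delta) = \Th^{-1}x - \delta$, which relies precisely on linearity of $T_h$; this is exactly why the statement is phrased for linear maps rather than for arbitrary (nonlinear) coordinate changes, where the translation would not pass through the warp in so clean a fashion.
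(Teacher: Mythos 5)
Your proposal is correct and follows essentially the same route as the paper: a direct pointwise computation of both sides of (\ref{eq:commutator1}), with the key step being that linearity of $T_h^{-1}$ gives $\Th^{-1}(x - \Th\delta) = \Th^{-1}x - \delta$ (a step the paper uses implicitly and you make explicit). Your derivation of (\ref{eq:commutator2}) by the substitution $\delta \mapsto \Th^{-1}\delta$ is a fine way to finish, matching the paper's claim that the two identities are equivalent.
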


\begin{proof} Applying both sides to $f$ we compute 
	\begin{align}
	(\OTh \calD_\delta f)(x)=(\calD_\delta f)(T_h^{-1}(x))=f(T^{-1}_h(x)-\delta),
	\end{align}
	\begin{align}
	(\calD_{(\Th \delta)} \OTh f)(x) = (\OTh f)(x-\Th \delta)=f(T^{-1}_h(x-\Th \delta))=f(T^{-1}_h(x)-\delta).
	\end{align}
\end{proof}

\begin{figure}[htbp]
	\begin{center}
		\includegraphics[width=0.5\textwidth]{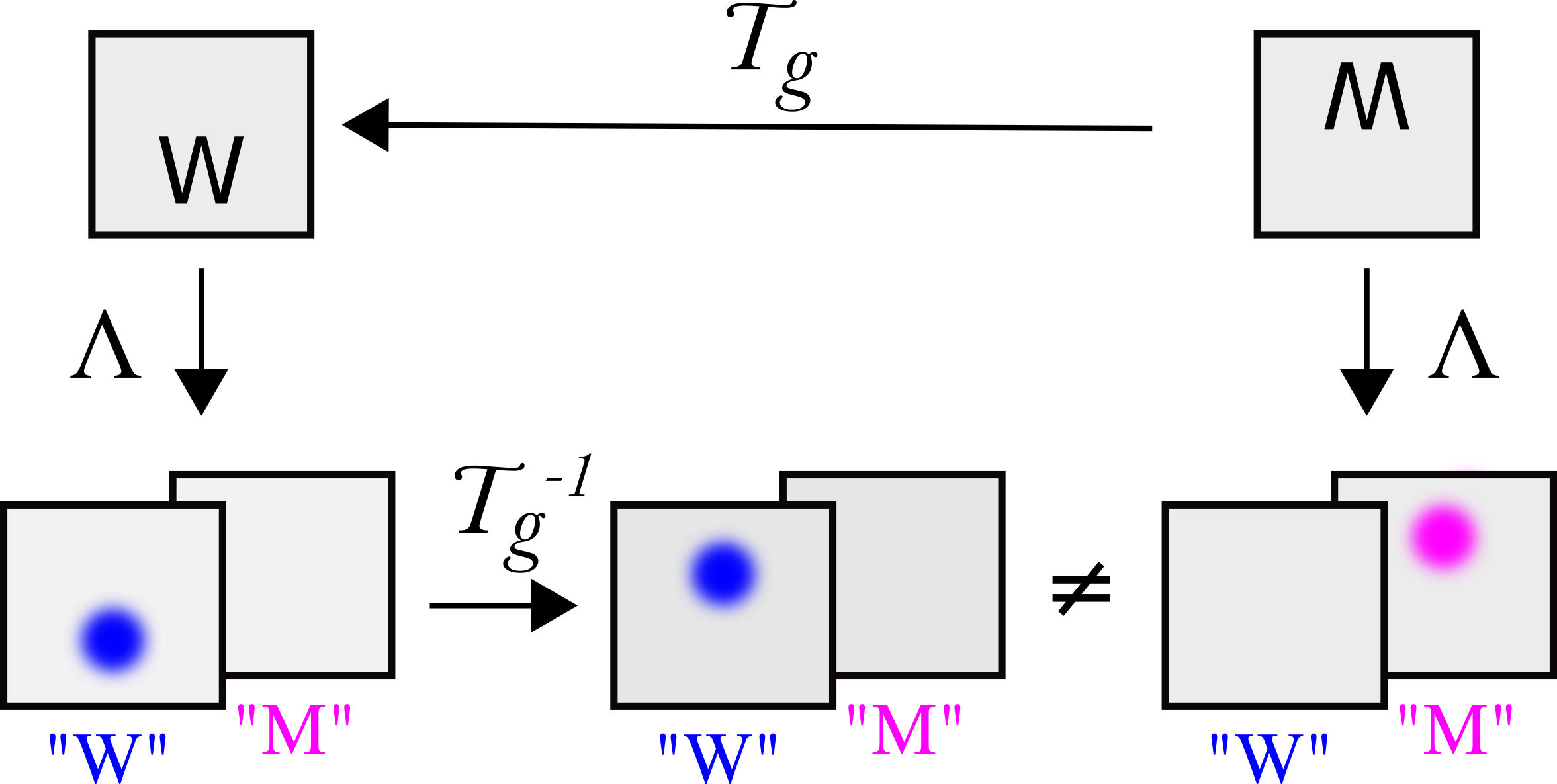}
	\end{center}
	\caption{\emph{An inverse spatial transformation of a \emph{CNN feature map} cannot, in general, align the feature maps of a transformed image with those of its original}. Here, the network $\La$ has two feature channels ``W'' and ``M'', and $T_g$ corresponds to a 180$^\circ$ rotation. Since different \emph{feature channels} respond to the rotated image as compared to the original image, it is not possible to align the respective feature maps with a spatial rotation. In fact, \emph{spatially transforming feature maps} can, in most cases, not eliminate differences related to object pose and can thus not enable invariant recognition.}
	\label{fig:tiny-proof}
\end{figure}


\begin{figure}[htbp]
	\begin{center}
		\includegraphics[width=0.7\textwidth]{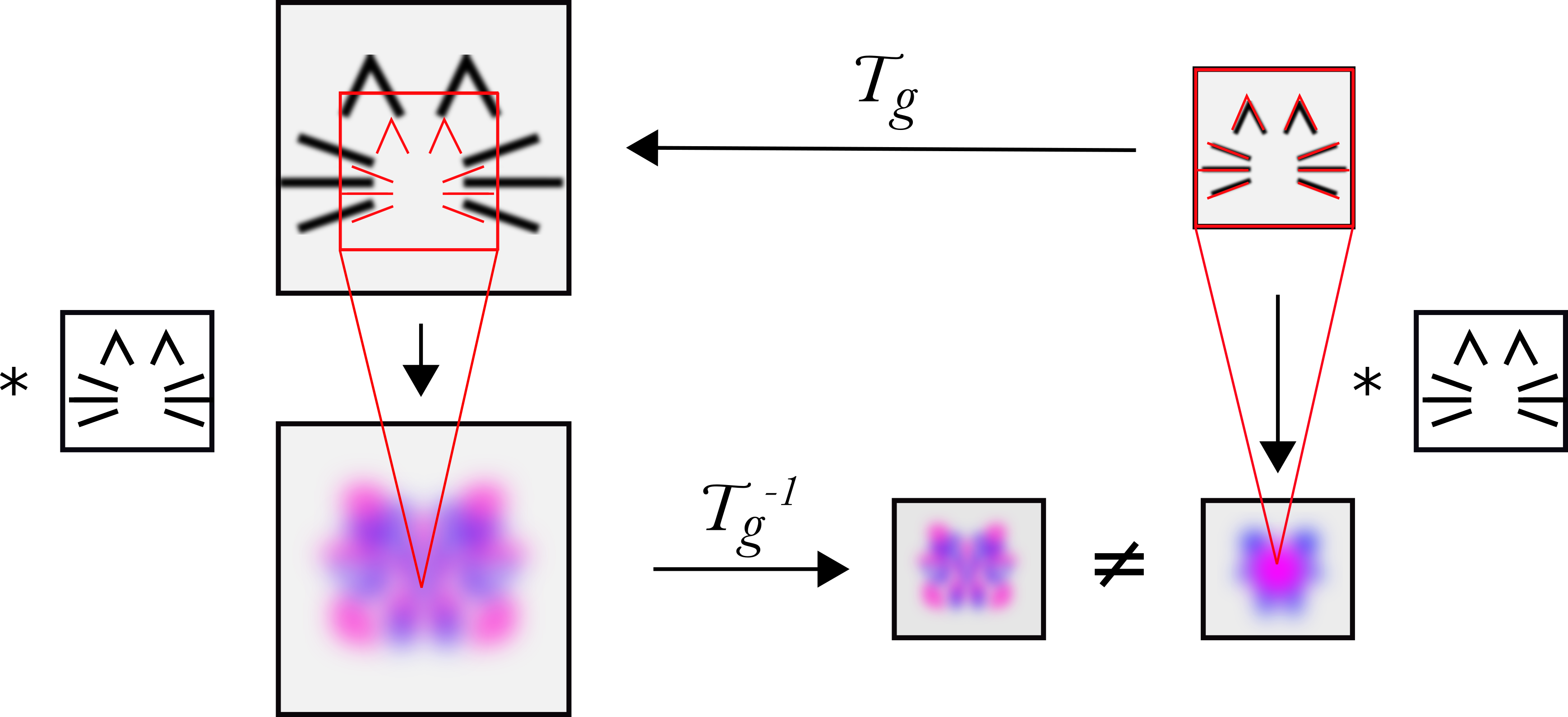}
	\end{center}
	\caption{\emph{For any transformation that includes a scaling component, the field of view of a feature extractor with respect to an object will differ between an original and rescaled image.}
		Consider e.g. a simple linear model that performs template matching with a single filter. When applied to the original image, the filter matches the size of the object that it has been trained to recognize and thus responds strongly. When applied to a rescaled image, the filter never covers the full object of interest, and thus the response cannot be guaranteed to take even \emph{the same set of values} for a rescaled image and its original.
	}
	\label{fig:tiny-proof-scale}
\end{figure}

\section{Intuition and outline of proof}
A spatial transformation of \emph{an input image} can clearly support invariant recognition by applying the inverse transformation to a transformed input:
\begin{equation} 
\La\, \OTh^{-1} \OTh f = \La f.
\end{equation}
The key question is whether it is possible to in a similar way undo a transformation of an input image \emph{after feature extraction}. Is there a spatial transformation $\Tg^k$ dependent on $\OTh$ such that 
at a certain depth in the network
\begin{equation}
\OTg^k \La^{(i)} \OTh f  \stackrel{?}{=}  \La^{(i)} f 
\label{eq:feature_alignment}
\end{equation}
holds for all $f$.  Note that this would imply that $\La^{(i)}$ is (restricted) covariant to $\OTh$.
Remember that, since we consider spatial transformations of feature maps, the same transformation is applied in each feature channel
\begin{equation}(\Tg^k \La^{(i)} \OTh f)_c(x) = (\La^{(i)} \OTh f)_c(T_g^{-1} x).
\end{equation}
 Clearly, if (\ref{eq:feature_alignment}) holds then transformations of feature maps could enable invariant recognition in a similar way as for input images. The feature maps of transformed images could be aligned at a certain depth, and the rest of the network could work on data without any variability stemming from differences in object pose. 
 
 Note that the question of \emph{how to know} which transformation to apply for each image, something which is e.g. learned from data for STNs, is not the topic here. We simply show that even with perfect information about the pose of the input image,  invariance cannot be achieved by a spatial transformation of the feature map. 
\subsection{Intuition}
The key intuitions why a spatial transformation of CNN feature maps cannot, in the general case, align feature maps of a transformed image with those of an original image, and thus not enable invariant recognition, are as follows:
\begin{enumerate}[(i)]
	\item 
The natural way to align the feature maps of a transformed image with those of its original would be to
apply \emph{the inverse spatial transformation} to the feature maps of the transformed image i.e.
\begin{equation} 
\OTh^{-1}(\Lai \OTh f)_c(x) = (\Lai \OTh f)_c(\Th x).
\label{eq:inverse_alignment}
\end{equation} 
For example, to align the feature maps of an original and a rescaled image, we would, after feature extraction, apply the inverse scaling to the feature maps. 
We will show that using $\OTh^{-1}$ is, in fact, \emph{a necessary condition} for (\ref{eq:feature_alignment}) to hold. The reason for this is that the features for corresponding spatial positions after alignment will otherwise be computed from not fully overlapping image regions in the original image, in which case the output can clearly not be guaranteed to be equal.

	\item When transforming an input image, this typically causes not only \emph{a spatial shift} in its feature map representation but also \emph{a shift in the channel dimension} of the feature maps.
	This is illustrated in Figure~\ref{fig:tiny-proof} for the case of rotations, but a similar reasoning holds for a large range of spatial transformations. A \emph{purely spatial transformation} of the feature maps cannot correct for a change in e.g. which channels respond most strongly at a specific spatial position. Thus, a spatial transformation is not enough to align the feature maps of a transformed image with those of its original.

\item \emph{The receptive fields}, i.e. the region in the input that influence the response, of the features extracted in a neural network
(for a single layer, this corresponds to the support of the convolutional filters) are typically \emph{not invariant} to the relevant transformation group. Indeed, any finite support region will not be invariant to shears or transformations that contain a uniform or non-uniform scaling component. For example, for a scaling transformation, a filter applied to a rescaled image, might never cover the full object of interest, and thus the feature response cannot be guaranteed to take even \emph{the same set of values} for a rescaled image and its original. This is illustrated in Figure~\ref{fig:tiny-proof-scale}. 


\end{enumerate}
Since a purely spatial transformation cannot align the feature maps of a transformed image with those of its original, spatially transforming feature maps will not enable invariant recognition. The exception is if the features in the specific network layer are \emph{themselves invariant} to the relevant transformation. An example of this would be a network built from rotation invariant filters $\la$, where $\la(x) = \la(\Th x)$ for all $\la$. For such a network, or a network with more complex (learned or hardcoded) rotation invariant features in a certain layer, invariant recognition could be enabled by spatial transformations of the feature maps. 

One might, however, note that such invariant features in intermediate layers are in many cases not desirable (especially not early in the network), since they discard too much information about object pose. For example, rotation invariant edge detectors would lose  information about the edge orientations which tend to be important for subsequent tasks.

 




\subsection{Outline of proof}

\subsubsection{Single-layer case}
We first consider the case of \emph{a single convolutional layer} and show that the requirement that it should be possible to align feature maps implies very strict conditions on the filters. Lemma \ref{lemma:equiv1} shows that \emph{inversely transforming the feature maps} of a transformed image is equivalent to applying \emph{transformed filters} to the original image:
	$$  
		\OTh^{-1}\La_\la \OTh f =\La_{(\det \Th)  \OTh^{-1} (\la)}f.
	$$
	Lemma \ref{lemma:noRot} is the key to seeing that  $\OTh^{-1}$ is the only possible candidate to align the feature maps of a transformed image with those of its original, since otherwise features at corresponding spatial positions are computed from different parts of the original image. Finally, we discuss the conditions on the filters under which invariance is possible, where Lemma \ref{id} implies that we can give quite detailed conditions on the filters and transformations, since it says that if two single-layer networks compute the same function they must have the same filter/filters.

\subsubsection{Multi-layer case}
We then consider a more general non-linear feature extractor such as the \emph{multi-layer convolutional network} defined in Section \ref{sec:CNN} and show that similar strict conditions hold in this case. We first isolate two key features shared by single convolution operators and CNNs: \emph{translation covariance} and \emph{semi-locality}. These features underpin most of the proofs for the single-layer case and allow these proofs to be extended to the multi-layer case. 
\emph{Semi-locality} (Definition \ref{def:semi-local}) is an extension of the concept of an operator with \emph{compact support}. 
The reason to define the concept of semi-locality, instead of considering operators with compact support, is that we wish to include operators that output a constant for the input $f=0$, such as CNNs with non-zero biases or non-linearities that do not take zero to zero (or both) would do. 
We then show that the multi-layer continuous neural network (\ref{eq:CNN}) is a translation-covariant, semi-local operator.

Since it is not possible to give explicit conditions for individual filters (e.g. symmetries implies that the same function can be implemented by more than one set of filters), we will instead consider conditions that need to hold for the non-linear features extracted in a specific network layer $\Lai$, to enable aligning CNN feature maps of a transformed image with those of an original image at depth $i$.

A key step in our proof is to note that any translation-covariant operator $\La$ is captured by a map $\mu_\La: V \to \R$ defined by (equation (\ref{eq:mu-def}))
$$
\mu_\La(f):=(\La f) (\vec{0}),
$$
which we refer to as \emph{the generator}. The generator can be seen as a non-linear analog of a convolutional filter (evaluated at the origin for a single-layer network). Lemma \ref{lem:conj} and Lemma \ref{multi-equiv} then establish the relationship between the inversely transformed feature maps of a transformed image and the feature maps of the original image, showing that 
$$ \OTh^{-1}\La \OTh =\La f$$  
implies that $\mu_\La(\OTh f ) = \mu_\La(f)$. That  is, the network features must themselves \emph{already be invariant} to the relevant image transformation. 
Lemma \ref{lemma:noRot-multi} shows that, as for the single-layer case, $\OTh^{-1}$ is the only possible candidate to align the feature maps of a transformed image with those of its original.


\section{Covariance and invariance in the single-layer case}\label{sec:single}
Consider a single channel convolutional neural network with the filter $\la$
\begin{equation}\label{eq:single-layer-cnn}
\La_\la f(x) \defeq  (f\star \la)(x)=\int_{\R^N} f(y)\la(x-y) dy=\int_{\R^N} \la(y) f(x-y) dy.
\end{equation}
Can precomposing with $\OTh$ be undone after the convolution step by postcomposing with some other $\OTg$:
\begin{equation}\label{eq:alignment}
\OTg \La_\la \OTh  \stackrel{?}{=} \La_\la.
\end{equation} We will see that this is not possible. Note that since a spatial transformation of feature maps never mixes information between different channels, it is enough to show this for a network with a single feature channel.



\subsection{Covariance relations of convolution operators}

We begin by showing the following lemma, expressing naturality of convolution.

\begin{lem}\label{lemma:equiv1}
	\begin{equation}\label{eq:equiv1}  
	\OTh^{-1}\La_\la \OTh =\La_{(\det \Th)  \OTh^{-1} (\la)} \end{equation}

\end{lem}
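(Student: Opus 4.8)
The plan is to verify the identity by applying both operators to an arbitrary image $f \in V$, evaluating at an arbitrary point $x \in \R^N$, and reducing the left-hand side to a single convolution integral by a change of variables. First I would unfold the three operations from the inside out. Using Definition~\ref{def:op-Th}, precomposition with $\OTh$ gives $(\OTh f)(y) = f(T_h^{-1} y)$, so that
\begin{equation}
(\La_\la \OTh f)(x) = \int_{\R^N} (\OTh f)(y)\, \la(x-y)\, dy = \int_{\R^N} f(T_h^{-1} y)\, \la(x-y)\, dy .
\end{equation}

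Next I would substitute $y = T_h z$, which picks up the Jacobian factor $|\det \Th|$ and turns the integrand into $f(z)\,\la(x - T_h z)$. Applying the outer operator $\OTh^{-1}$ amounts to replacing $x$ by $T_h x$, so the filter argument becomes $\la(T_h x - T_h z)$. The crucial algebraic step here is to use the \emph{linearity} of $T_h$ to rewrite $T_h x - T_h z = T_h(x-z)$, whence $\la(T_h(x-z)) = (\OTh^{-1}\la)(x-z)$ by the definition of the transformed filter. Collecting terms yields
\begin{equation}
(\OTh^{-1}\La_\la \OTh f)(x) = |\det \Th| \int_{\R^N} f(z)\, (\OTh^{-1}\la)(x-z)\, dz = \bigl( \La_{|\det \Th|\, \OTh^{-1}\la}\, f \bigr)(x),
\end{equation}
which is exactly the claimed formula, reading $(\det \Th)$ as $|\det \Th|$.

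The computation is essentially routine, so I expect real care to be needed in only two places. The first is the change of variables: one must track the Jacobian $|\det \Th|$ correctly and, since the statement writes $(\det \Th)$ without an absolute value, be explicit about the orientation convention. The sign for reflections is in fact already absorbed into the spatial flip encoded by $\OTh^{-1}\la$, so the coefficient coming from the integral is genuinely $|\det \Th|$; for the orientation-preserving maps (scalings, shears, rotations) the two agree, and this point should simply be flagged rather than being a real difficulty. The second, more nominal, concern is justifying that the substitution and the interchange of the two spatial operations are legitimate for the function classes at hand; since $\la \in L^1_{comp}$ and $f \in L^1_{loc}$, the integral converges absolutely over the compact support of the (transformed) filter, so the change of variables applies and no analytic subtlety arises beyond bookkeeping.
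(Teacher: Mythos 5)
Your proof is correct and takes essentially the same route as the paper's: unfold the definitions, change variables (your $y = T_h z$ versus the paper's $u = T_h^{-1}y$), use linearity of $T_h$ to recognize $\la(T_h(x-z)) = (\OTh^{-1}\la)(x-z)$, and identify the result as a convolution with the transformed filter. If anything, you are more careful than the paper on the one delicate point: the paper's own computation writes $du = (\det \Th^{-1})\,dy$ and states the lemma with $(\det\Th)$, whereas the Lebesgue change of variables genuinely produces $|\det\Th|$, exactly as you flag.
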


\begin{proof}
	We compute using change of variables $u=\Th^{-1} y$, $du=(\det \Th^{-1}) \, dy$
	\begin{align}
	(\La_\la \OTh f)(x) &= \int_{\R^N} f(\Th^{-1} y)\la((x-y)) dy \nonumber \\
	&= \int_{\R^N} f(u)\la(\Th(\Th^{-1}x-\Th^{-1}y)) \det \Th du \nonumber\\
	&= \int_{\R^N} f(u)\la(\Th(\Th^{-1}x-u)) \det \Th du \nonumber\\
	&=  (\La_{(\det \Th) \OTh^{-1} \la} f) (\Th^{-1}x) \nonumber\\
	&=  ( \OTh \La_{(\det \Th) \OTh^{-1} \la} f) (x)
	\end{align}

	Applying $\OTh^{-1}$ to both sides yields the lemma.
	
\end{proof}

 Thus, inversely transforming \emph{the feature maps} of a transformed image will not yield the same feature maps as for the original image. Instead, this is equivalent to extracting features from the original image with \emph{transformed filters}.



\subsection{Using $\OTg=\OTh^{-1}$ is a necessary condition to align feature maps}

The following lemma will be the key to seeing that a necessary condition for being able to align the feature maps of a transformed image with those of its original is using $\OTh^{-1}$.  

\begin{lem}\label{lemma:noRot} 
If for two compactly supported filters $\la_1\neq 0$ and $\la_2\neq 0$ we have 
	$\La_{\la_1}=\OTh  \La_{\la_2}$ then $\Th=\Id$.
	
\end{lem}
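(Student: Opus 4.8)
The plan is to reduce the operator identity to an identity between the underlying convolution kernels and then exploit compact support. First I would apply both sides of $\La_{\lambda_1}=\OTh \La_{\lambda_2}$ to an arbitrary image $f$ and evaluate at a point $x$, using the definition of $\OTh$ to obtain
\[
\int_{\R^N} f(y)\,\lambda_1(x-y)\,dy \;=\; (\La_{\lambda_2}f)(T_h^{-1}x)\;=\;\int_{\R^N} f(y)\,\lambda_2(T_h^{-1}x - y)\,dy
\]
for every $f$ and every $x$. For fixed $x$ the two integrands differ by a single $L^1_{comp}$ function of $y$ that annihilates every test image $f$, so the kernels must coincide a.e.: $\lambda_1(x-y)=\lambda_2(T_h^{-1}x - y)$.

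Next I would make the substitution $z=x-y$, which turns this into
\[
\lambda_1(z)=\lambda_2\!\left(z+(T_h^{-1}-\Id)\,x\right)\qquad\text{for all }x\text{ and }z.
\]
For fixed $z$ the left-hand side does not depend on $x$, so the right-hand side must be constant as $x$ varies. Writing $W:=\operatorname{Im}(T_h^{-1}-\Id)$ for the image subspace, this says precisely that $\lambda_2$ is invariant under every translation by a vector of $W$, i.e. $\lambda_2(z+w)=\lambda_2(z)$ for all $w\in W$.

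Finally I would invoke compact support. If $W\neq\{\overline{0}\}$, pick a nonzero $w\in W$; then $\lambda_2$ is periodic with period $w$, so any nonzero value $\lambda_2(z_0)$ would recur along the unbounded set $\{z_0+n w : n\in\mathbb{Z}\}$, contradicting compact support. Hence $\lambda_2\equiv 0$, contradicting $\lambda_2\neq 0$. Therefore $W=\{\overline{0}\}$, i.e. $T_h^{-1}-\Id=0$, so $\Th=\Id$.

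The one step that requires genuine care, and the main obstacle, is the passage from the integral identity to a.e. equality of the kernels over a sufficiently rich class of inputs $f$ (this is essentially the injectivity fact later isolated as Lemma \ref{id}); it can be argued directly by testing against indicators of small balls and using the Lebesgue differentiation theorem, and the same bookkeeping is what lets the a.e. translation-invariance under a positive-dimensional subspace force $\lambda_2=0$. Everything after the kernel identity is then a short change of variables plus the compact-support argument. As an alternative route that reuses the machinery already in place, one can instead push a translation $\calD_\delta$ through the identity: combining translation covariance of convolution (Lemma \ref{lemma:conv-trans-covar}) with the commutation relation $\OTh \calD_\delta = \calD_{(\Th\delta)}\OTh$ (Lemma \ref{lem:conv-lin}) gives $\calD_\delta \La_{\lambda_1}=\calD_{(\Th\delta)}\La_{\lambda_1}$, hence $\La_{\lambda_1}=\La_{\calD_{(\Th-\Id)\delta}\lambda_1}$ for all $\delta$, and the same compact-support argument applied to $\lambda_1$ again forces $\Th=\Id$.
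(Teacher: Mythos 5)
Your proof is correct, but it takes a genuinely different route from the paper's. You reduce the operator identity to a kernel identity $\la_1(z)=\la_2\left(z+(T_h^{-1}-\Id)\,x\right)$ (valid, for each fixed $x$, for almost every $z$), which forces $\la_2$ to be invariant under translations by the subspace $\operatorname{Im}(T_h^{-1}-\Id)$; compact support then kills $\la_2$ unless that subspace is trivial. This hinges on the injectivity step (the operator determines the kernel), which is exactly the content of Lemma \ref{id}; since that lemma is proved independently by mollifiers, there is no circularity, and the only care needed is that all identities hold almost everywhere --- harmless, because an $L^1$ function that agrees a.e.\ with its translate by a nonzero $w$ agrees a.e.\ with its translates by $nw$ for all $n$, and compact support then forces it to vanish a.e. The paper instead argues by contradiction with a single probe: it picks a compactly supported $f$ with $(\La_{\la_1}f)(\vec{0})\neq 0$, translates it by a point $p$ chosen so that $|T_h^{-1}(p)-p|$ exceeds the sum of the support radii, and uses Lemmas \ref{lemma:conv-trans-covar} and \ref{lem:conv-lin} to show the two sides of $\La_{\la_1}=\OTh\La_{\la_2}$ disagree at one evaluation point. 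What your route buys is an explicit structural identity pinpointing the obstruction as subspace-periodicity of the filter itself. What the paper's route buys is that it never identifies kernels at all: it uses only translation covariance and support (semi-locality) considerations, which is precisely why the same proof template generalizes verbatim to the non-linear multi-layer setting (Lemma \ref{lemma:noRot-multi}), where your kernel-based argument has no analog since the operator is no longer a convolution. Your alternative route (pushing $\calD_\delta$ through the identity to get $\calD_\delta\La_{\la_1}=\calD_{\Th\delta}\La_{\la_1}$) is closer to the paper in the lemmas it invokes, but it still needs Lemma \ref{id} (or an equivalent support argument at the level of outputs $\La_{\la_1}f$) to pass from operator periodicity to filter periodicity, a step the paper's probe argument avoids entirely.
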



%
%
%


\begin{proof}
	
	Since $\la_1\neq 0$, we can pick a compactly supported $f$ such that $(\La_{\la_1} f) (\vec{0})\neq 0$ (pick any $f$ with $\La_{\la_1} f\neq 0$, translate it to make $\La_{\la_1} f(\vec{0})\neq 0$, and, if needed, multiply by a bump function of sufficiently large ball to make it compactly supported). Suppose $f$ is supported on a ball of radius $r(f)$ around the origin and $\la_2$ on a ball of radius $r(\la_2)$ around the origin.   If $T_h\neq \Id$ we can pick $p$ such that $|T^{-1}_h(p)-p| >r(f)+ r(\la_2)+1$. Let $\hat{f}(x)=f(x+p)$ i.e. $\hat{f}=\calD_{-p}f$. Then using Lemma \ref{lemma:conv-trans-covar} we have	
	\begin{equation}
	(\calD_p \La_{\la_1} \hat{f} )(\vec{0})=(\La_{\la_1}   \calD_p  \hat{f} )(\vec{0}) =(\La_{\la_1}  f )(\vec{0})  \neq 0
	\end{equation}
	but 
	\begin{align}
	(\calD_p \OTh  \La_{\la_2} \hat{f} )(\vec{0})=( \OTh \calD_{T^{-1}_h p} \La_{\la_2} \hat{f})(\vec{0})=( \OTh  \La_{\la_2}   \calD_{T^{-1}_h p -p} f)(\vec{0})	 \nonumber \\
    =(\La_{\la_2}   \calD_{T^{-1}_h p -p} f) (T_h^{-1}(\vec{0}))=(\La_{\la_2}   \calD_{T^{-1}_h p -p} f)(\vec{0})=0,
	\end{align}	
	where the first equality follows from Lemma \ref{lem:conv-lin}, the second from Lemma \ref{lemma:conv-trans-covar}, and the last from the fact that  
	$\tilde{f}=\calD_{T^{-1}_h p -p} f$ is supported on a ball of radius $r(f)$ around $T^{-1}_h p -p$, which is disjoint from the ball of radius $r(\la_2)$ around the origin on which $\la_2$ is supported;  this means that in the convolution integral $( \La_{\la_2}\tilde{f})(0)= \int \tilde{f}(y) \lambda (-y) dy$  the integrand is zero at every point $y$, thus yielding the zero result, as wanted. 
\end{proof}


\subsection{Convolution determines the filter}

We now show that if two single-layer networks compute the same function, their filters must be equal. 

\begin{lem}\label{id}
	If $\La_{\la_1}=\La_{\la_2}$ then $\la_1=\la_2$.
\end{lem}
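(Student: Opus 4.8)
The plan is to exploit the fact that $f \mapsto \La_\la f$ is linear in the filter $\la$, and then to recover the filter itself by testing the (vanishing) operator against a single, carefully chosen input. Set $\mu \defeq \la_1 - \la_2$; by linearity the hypothesis $\La_{\la_1} = \La_{\la_2}$ is equivalent to $\La_\mu f = 0$ for every $f \in V$, and it suffices to conclude $\mu = 0$ as an element of $L^1_{loc}$ (i.e.\ almost everywhere), which gives $\la_1 = \la_2$. Observe that $\mu \in L^1_{comp}$, being a difference of two compactly supported integrable functions, so all integrals below are over a fixed compact set.

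First I would evaluate the identically-zero feature map at the origin. Using the convolution formula, for every $f \in V$,
\begin{equation}
0 = (\La_\mu f)(\vec{0}) = \int_{\R^N} \mu(y)\, f(-y)\, dy .
\end{equation}
Next I would turn this into a pairing against an arbitrary test function by substitution: given any $g \in V$, apply the identity to the input $f(z) \defeq g(-z)$, which again lies in $V$ (reflection preserves local integrability); then $f(-y) = g(y)$ and the display becomes $\int_{\R^N} \mu(y)\, g(y)\, dy = 0$. Thus $\mu$ integrates to zero against every $g \in V$.

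Finally I would choose $g$ so as to extract $\mu$ itself. Since $\mu$ is real-valued, integrable and compactly supported, the function $g(y) \defeq \operatorname{sign}\!\big(\mu(y)\big)$ is bounded and may be taken compactly supported (it contributes nothing where $\mu$ vanishes), hence $g \in V$. Feeding this $g$ into the previous step yields
\begin{equation}
0 = \int_{\R^N} \mu(y)\, \operatorname{sign}\!\big(\mu(y)\big)\, dy = \int_{\R^N} \lvert \mu(y) \rvert \, dy = \lVert \mu \rVert_{L^1},
\end{equation}
so $\mu = 0$ almost everywhere, which is the desired conclusion $\la_1 = \la_2$.

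The argument is short and there is no serious obstacle; the only points needing care are bookkeeping rather than substance. One must verify that the specific inputs used, namely $f = g(-\,\cdot\,)$ and $g = \operatorname{sign}(\mu)$, genuinely lie in the admissible space $V = L^1_{loc}$ (they do, being bounded and compactly supported), so that the operator equality $\La_{\la_1} = \La_{\la_2}$ legitimately applies to them. The only analytic content is the elementary measure-theoretic fact that an $L^1$ function annihilating all bounded compactly supported test functions vanishes almost everywhere, which the choice $g = \operatorname{sign}(\mu)$ makes fully explicit. If one prefers to avoid that particular choice, two alternative routes give the same conclusion: feed an approximate identity $\psi_\varepsilon$ into $\La_\mu$ and use $\psi_\varepsilon \star \mu \to \mu$ in $L^1$; or, via Fourier analysis, note that $\hat{f}\,\hat{\mu} \equiv 0$ for all $f$ forces $\hat{\mu} \equiv 0$. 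The direct evaluation above is the most self-contained and matches the elementary-analysis spirit of the paper.
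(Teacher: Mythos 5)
Your argument is correct in substance, and it takes a genuinely different route from the paper's. Both proofs begin identically, reducing to the claim that $\La_\mu = 0$ forces $\mu \defeq \la_1 - \la_2 = 0$; but the paper then feeds in a sequence of mollifiers $f_n$ converging to the delta function and invokes the standard approximate-identity theorem, so that $\La_\mu f_n = \mu \star f_n \to \mu$ in $L^1$ exhibits $\mu$ as an $L^1$-limit of zero functions. That proof never evaluates anything at a point, which matters because elements of $V = L^1_{loc}$ are equivalence classes defined only almost everywhere. This is the one step you gloss over: the hypothesis gives you $\La_\mu f = 0$ only a.e., so the evaluation $(\La_\mu f)(\vec{0}) = 0$ is not automatic for general $f \in V$. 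It \emph{is} legitimate for the specific input you use, because there $f$ is bounded and compactly supported while $\mu \in L^1_{comp}$, so $\mu \star f$ is continuous — by $L^1$-continuity of translations,
\begin{equation}
\bigl| (\mu \star f)(x) - (\mu \star f)(x') \bigr| \;\le\; \|f\|_{\infty}\, \| \calD_{x-x'}\,\mu - \mu \|_{L^1} \longrightarrow 0 \quad \text{as } x' \to x,
\end{equation}
and a continuous function vanishing a.e. vanishes everywhere, in particular at $\vec{0}$. With that one-line patch your duality argument (pairing against $g = \operatorname{sign}(\mu)$ to extract $\|\mu\|_{L^1}$ directly) is complete and fully self-contained: it needs no external citation, whereas the paper leans on a quoted theorem from Stein. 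Conversely, the paper's mollifier route buys immunity from the point-evaluation subtlety, since all its identifications happen in the $L^1$ topology. Note that the first alternative you sketch at the end — feeding an approximate identity into $\La_\mu$ — is precisely the paper's proof.
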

\begin{proof}
	
	Letting $\la=\la_1-\la_2$, we just need to show that $\La_\la=0$ implies $\la=0$. 
	
	
	
	Let   $f_n$  be a sequence of mollifiers converging to the delta function at the origin (that is a sequence of non-negative smooth functions each with integral equal to 1 and with their supports on balls of radii converging to $0$). Then (see for example  \cite{stein2009real}, Chapter 3, Theorem 2.3) we have $\La_\la f_n \to \la$ (in $L^1$), so that if   $\La_\la$ is the zero functional, then $\la$ is zero.
\end{proof}
This lemma implies that we can give more specific conditions on the filters in a single-layer network for which it is possible to achieve invariance by aligning CNN feature maps.

\subsection{Conclusions in the single-layer case}

 We can now conclude that the only admissible operator to align CNN feature maps is $\OTh^{-1}$ and that alignment is only possible if the convolutional filters are themselves invariant to the relevant transformation:
 
 \begin{pro}

If $ \OTg \La_\la \OTh =\La_\la$, this implies that $\OTg = \OTh^{-1} $ and that $\la= (\det \Th)  \OTh^{-1} (\la)$
\end{pro}

\begin{proof}
	Writing $\OTg=\OTH (\OTh)^{-1}$ and $\la_h=(\det \Th)  \OTh^{-1} (\la)$, we see that
	\begin{equation} 
	\OTg \La_\la \OTh=\OTH (\OTh)^{-1}\La_\la \OTh=\OTH \La_{\la_h}.
	\end{equation}

Thus, if (\ref{eq:alignment}) holds, by Lemma \ref{lemma:noRot} we must have $T_H=\Id$ and $\OTg =\OTh^{-1} $. Then, by Lemma~\ref{lemma:equiv1} and  Lemma~\ref{id} we must have 

\begin{equation}
\label{eq:equivI}\la= (\det \Th)  \OTh^{-1} (\la).
\end{equation}
	\end{proof}

This means that up to rescaling by $\det \Th$, the filter $\la$ is \emph{invariant} under the linear transformations $\Th$. Observe that this implies that $\la$ is invariant under all integer powers of $\Th$. If we further wish to have a network invariant to all transformations in a group $H$, then this also needs to hold for all $h \in H$. 

\begin{pro}
	The equality (\ref{eq:equivI}) is impossible for bounded non-zero $\la$ unless ${|\det \Th|=1}$. 
\end{pro}

\begin{proof}
	We have $|\sup  (\det \Th)  \OTh^{-1} (\la)|=|\det \Th||\sup \la|$, so if $|\sup \la|\neq 0$, $|\sup \la|\neq \infty$ and (\ref{eq:equivI}) holds then we must have $|\det \Th|=1$.
\end{proof}

One may be prepared to ignore intensity (aka rescaling), instead considering
\begin{equation}\label{eq:equivII}\la= C \OTh^{-1} (\la) \end{equation}
for some $C\in \R$. Even with this relaxation, this invariance can only hold for severely limited kinds of filters and transformations:

\begin{pro}
The equality (\ref{eq:equivII}) is impossible for $\la$ with support on a set of finite but non-zero measure, unless ${|\det \Th|=1}$.
\end{pro}

\begin{proof}
	 If $\la$ has support of measure $m$, then $\OTh^{-1} (\la)$ has support of measure $|\det \Th^{-1}|m$. If  (\ref{eq:equivII}) holds then $|\det \Th^{-1}|m=m$ and so if $m$ is finite and non-zero we must have $|\det \Th^{-1}|=1$, i.e. $|\det \Th|=1$.
\end{proof}


More strongly, in the case when the image domain is $\R^2$, one can use the classification of 2D real matrices by Jordan canonical form to study the behavior of iterations of $\Th$, as done, for example, in Chapter 3.1 of  \cite{hasselblatt2003first} (a very similar analysis is possible in higher dimensions).  Using this, we can analyze further even the cases where $|\det \Th|=1$, as follows.
	
	\begin{pro}\label{pro:single-layer}
		The equality (\ref{eq:equivII}) can hold  for $\la$ with support on a set of finite but non-zero measure only if $\Th$ is conjugate to some rotation or, if $T_h$ is orientation reversing, a reflection matrix; and in those cases only if (i) $\Th^n=Id$  for some $n$ and $\la$ is symmetric with respect to this finite set of transforms, or (ii) if $\la$ is constant on a collection of concentric ellipses along which $\Th$ rotates things.
	\end{pro}
	
	\begin{proof}
	

	There are special cases when all the  eigenvalues of $\Th$ are real and have absolute value 1. Then, either $\Th^2=\Id$, in which case $\lambda$ simply has to have a 2-fold symmetry (this includes the cases when $\Th$ is the reflection around the origin or a reflection through a line); or $\Th$ has Jordan form $\begin{pmatrix} 1&1\\0&1\end{pmatrix}$ or $\begin{pmatrix} -1&1\\0&-1\end{pmatrix}$ and $\Th^n=B^{-1}\begin{pmatrix} 1&n\\0&1\end{pmatrix}B$ or $\Th^n=B^{-1}\begin{pmatrix} (-1)^n&n\\0&(-1)^n\end{pmatrix}B$, respectively, for some fixed basis change matrix $B$. We see that the eigenspace of eigenvalue 1 is fixed, but everything else moves out to infinity, so an invariant $\lambda$ would have to be supported on this (1D) eigenspace (which would imply that the only possible invariant filter corresponds to a  $\La_\la$ which is zero).

	Similarly, if $|\det \Th|=1$ but $\Th$ has distinct real eigenvalues $d_1, d_2$ (this happens precisely when $\tr^2 \Th-4 \det \Th >0$), of size not equal to 1, $|d_1|>1>|d_2|$,  
	then $\Th$ has Jordan form $\begin{pmatrix} d_1&0\\0&d_2\end{pmatrix}$  and $\Th^n=B^{-1}\begin{pmatrix} d_1^n&0\\0&d_2^n\   \end{pmatrix}B$  for some fixed basis change matrix $B$;  everything  not in the $d_2$ eigenspace  moves out to infinity under positive iterations and everything not in $d_1$ eigenspace  under negative ones (in the new coordinates the motion is along hyperbolas $y=1/x$, and this is why such $\Th$ is called \emph{hyperbolic}), so an invariant $\lambda$ would have to be supported only at the origin.
	
	Further, \emph{the only remaining case} $|\det \Th|=1$ but $\tr^2 \Th-4 \det \Th <0$ (a.k.a. $\det \Th=1$, but $|\tr \Th| <2$), gives, up to a change of basis, \emph{a rotation matrix}. In the new basis, concentric circles around the origin are preserved by the rotation; in the original basis these are ``concentric" ellipses (this is the reason $\Th$ is called \emph{elliptic} in this case). If the rotation is by an irrational multiple of $\pi$, the orbit of any point is dense in the corresponding ellipse (see, for example, \cite{hasselblatt2003first}, Proposition 4.1.1) and equality (\ref{eq:equivII}) would still imply that $\la$ is constant on each of these ellipses. On the other hand, the $\Th$s where rotation is by a rational multiple of $\pi$ are precisely ones with $\Th^n=Id$ for some $n$.
\end{proof}	

Thus, we conclude that for a single-layer network, aligning the feature maps of a transformed image with those of its original is only possible for transformations that \emph{correspond to rotations or reflections} in some basis, and in that case only if the filters are themselves rotation/reflection invariant. Notably, such alignment is not possible for general affine transformations, scaling transformations or shears since there do not exist any non-trivial affine-, scale- or shear-invariant filters with compact support.


\section{Covariance and invariance in the multi-layer case}
\label{sec:multilayer}
We now give an equivalent proof for a more general non-linear, semi-local, translation-covariant feature extractor $\La$ (semi-locality is defined below). We are specifically interested in continuous multi-layer CNNs (Section \ref{sec:CNN}) but the proof is valid for any such operator. We ask whether equation (\ref{eq:alignment})
\begin{equation*}
\OTg^k \La  \OTh  \stackrel{?}{=} \La
\label{eq:post-pre-compose}
\end{equation*}	
could be true for such operators and if so under what conditions. 
Note that for the case of a multi-layer convolutional neural network, it is enough to consider a single feature channel at a certain depth, since a spatial transformation never mixes information between the channels. For simplicity, we will refer to a feature map at depth $i$ $(\Lai f)_c$ as $\La f$.

Two key features are shared by single convolution operators and CNNs: translation covariance and semi-locality. These features underpin most of the proofs for the single-layer case and allow these proofs to be extended to the multi-layer case.



\subsection{Commutators and conjugation of translation-covariant operators}
Recall that by Proposition \ref{prop:CNN-covar} the multi-layer CNN is a translation-covariant operator. We further note that translation covariance holds also when one changes coordinates on both input and output using $T_h$, i.e. when conjugating $\Lambda$ with the operator $\OTh$. 

\begin{lem}
	If $\La$ is translation covariant, then so is 
		$\OTh^{-1} \La \OTh$.
\end{lem}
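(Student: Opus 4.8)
The plan is to unwind the definition of translation covariance and push the translation operator $\calD_\delta$ through the conjugation using the commutation relations of Lemma~\ref{lem:conv-lin}. Write $M = \OTh^{-1}\La\OTh$; by Definition~\ref{def:translation-covariance} it suffices to verify that $M\calD_\delta = \calD_\delta M$ for every $\delta \in \R^N$.

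First I would compute $M\calD_\delta = \OTh^{-1}\La\OTh\calD_\delta$ and apply the commutator relation (\ref{eq:commutator1}), $\OTh\calD_\delta = \calD_{(\Th\delta)}\OTh$, to rewrite this as $\OTh^{-1}\La\calD_{(\Th\delta)}\OTh$. Next, since $\La$ is assumed translation covariant, I may commute $\La$ past the translation via (\ref{eq:tr-covar}): $\La\calD_{(\Th\delta)} = \calD_{(\Th\delta)}\La$, which yields $\OTh^{-1}\calD_{(\Th\delta)}\La\OTh$. The final step is to move $\calD_{(\Th\delta)}$ to the left of $\OTh^{-1}$, which should collapse the translation amount back to $\delta$ and produce $\calD_\delta\,\OTh^{-1}\La\OTh = \calD_\delta M$, as desired.

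The one place requiring care is this last step: Lemma~\ref{lem:conv-lin} is stated for $\OTh$, not for its inverse, so I must apply it with $h$ replaced by $h^{-1}$ (equivalently, record the commutator for $\OTh^{-1}$ directly). Since $T_{h^{-1}} = \Th^{-1}$, the lemma gives $\OTh^{-1}\calD_\epsilon = \calD_{(\Th^{-1}\epsilon)}\OTh^{-1}$, and taking $\epsilon = \Th\delta$ makes the new translation amount $\Th^{-1}\Th\delta = \delta$. This is the only genuinely non-routine observation; everything else is a mechanical chain of three substitutions. No regularity or support hypotheses on $\La$ are needed here, only its translation covariance and the linearity of $\Th$, so the argument applies verbatim to the multi-layer CNN of Section~\ref{sec:CNN} once Proposition~\ref{prop:CNN-covar} supplies the covariance of $\La$.
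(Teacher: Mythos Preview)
Your proof is correct and essentially the same as the paper's: both use two applications of Lemma~\ref{lem:conv-lin} (once for $\OTh$ and once for $\OTh^{-1}$) sandwiching the translation covariance of $\La$. The only cosmetic difference is direction---the paper starts from $\calD_x\,\OTh^{-1}\La\OTh$ and pushes the translation to the right, whereas you start from $\OTh^{-1}\La\OTh\,\calD_\delta$ and push it to the left.
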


\begin{proof} Using Lemma \ref{lem:conv-lin} and Definition \ref{def:translation-covariance} we compute:
	
	\begin{align}
	\calD_{x} \OTh^{-1} \La \OTh =&\nonumber\\
	=& \OTh^{-1} \calD_{T_h x} \La \OTh =\OTh^{-1}  \La \calD_{T_h x} \OTh =\OTh^{-1}  \La  \OTh \calD_{T_h^{-1}(T_h  x)}=\nonumber\\&\hspace{6.5cm} =\OTh^{-1}  \La  \OTh \calD_{ x}
	\end{align}
\end{proof}

\subsection{Generators of translation-covariant operators}

A key step in the multi-layer proof is to note that any translation-covariant operator $\La:V \to V$ is captured by a map $\mu_\La: V \to \R$ defined by 
\begin{equation}
\mu_\La(f):=(\La f) (\vec{0})
\label{eq:mu-def}.
\end{equation}
We call this $\mu_\La$  \textbf{the generator} of $\La$ (sometimes denoted simply by $\mu$ when the relevant $\La$ is clear from the context). 
Since we have
\begin{equation} 
(\La f) (x)=(\calD_{-x} \La f)  (\vec{0})=(\La \calD_{-x} f)(\vec{0})=\mu (\calD_{-x} f)
\label{eq:lambda-nonlinear-def},
\end{equation}
we can, conversely, given $\mu$ define a translation-covariant operator $\La_\mu$ by 
\begin{equation}
(\La_\mu f)(x) := \mu (\calD_{-x} f).
\label{eq:lambda-mu-def}
\end{equation}
Clearly the operations in (\ref{eq:mu-def}) and (\ref{eq:lambda-mu-def}) are inverses of each other. 
The generator $\mu_\La$ can be seen as a non-linear analog of a convolutional filter in the single-layer case.

The following Lemma is the equivalent to Lemma \ref{lemma:equiv1} in the single-layer case. 

\begin{lem}\label{lem:conj}
	The generator of $\OTh^{-1} \La_\mu \OTh$ is  $\mu_h:=\mu(\OTh f)$.
\end{lem}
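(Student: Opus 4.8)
The plan is to unwind the definition of the generator directly. Since $\La_\mu$ is translation-covariant by construction (equation~(\ref{eq:lambda-mu-def})), the preceding lemma guarantees that $\OTh^{-1}\La_\mu\OTh$ is translation-covariant as well, so it has a well-defined generator obtained by evaluation at $\vec{0}$. This is what makes the phrase ``the generator of $\OTh^{-1}\La_\mu\OTh$'' meaningful in the first place, and it is the only structural fact I need before computing. The one thing to track carefully is the contragredient convention from Definition~\ref{def:op-Th}.

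First I would write the generator as the value of the conjugated operator at the origin,
\begin{equation}
\mu_{\OTh^{-1}\La_\mu\OTh}(f) = \left(\OTh^{-1}\La_\mu\OTh f\right)(\vec{0}).
\end{equation}
Since $\OTh$ precomposes with $\Th^{-1}$, the operator $\OTh^{-1}$ precomposes with $\Th$, i.e. $(\OTh^{-1} g)(x)=g(\Th x)$ for any $g$. Evaluating at $x=\vec{0}$ and using that $\Th$ is linear (so $\Th\vec{0}=\vec{0}$) collapses the outer transformation:
\begin{equation}
\left(\OTh^{-1}\La_\mu\OTh f\right)(\vec{0}) = \left(\La_\mu\OTh f\right)(\Th\vec{0}) = \left(\La_\mu\OTh f\right)(\vec{0}).
\end{equation}

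Finally I would apply the definition of the generator of $\La_\mu$ itself. Evaluating $\La_\mu g$ at the origin gives, by (\ref{eq:lambda-mu-def}) with $\calD_{-\vec{0}}=\Id$, the value $(\La_\mu g)(\vec{0}) = \mu(g)$. Taking $g=\OTh f$ then yields $(\La_\mu\OTh f)(\vec{0}) = \mu(\OTh f)$, which is exactly $\mu_h(f)$. Hence $\mu_h = \mu(\OTh f)$ is the generator of $\OTh^{-1}\La_\mu\OTh$, as claimed.

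There is no genuine obstacle here: the statement is a one-line consequence of the definitions, and the analogy with Lemma~\ref{lemma:equiv1} is exact. The only place where care is needed is keeping the contragredient action straight, so that $\OTh^{-1}$ evaluates a function at $\Th x$ rather than at $\Th^{-1} x$, together with the (trivial but essential) use of linearity of $\Th$ to fix the origin. Everything else is bookkeeping against the definitions already established.
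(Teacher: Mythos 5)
Your proof is correct and follows essentially the same route as the paper's: evaluate the conjugated operator at $\vec{0}$, use the contragredient convention and linearity of $\Th$ to collapse the outer $\OTh^{-1}$, then apply the definition of $\La_\mu$ at the origin to obtain $\mu(\OTh f)$. The only addition is your explicit appeal to the preceding lemma for well-definedness of the generator, which the paper leaves implicit but which is a harmless (and sensible) remark.
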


\begin{proof}
	\begin{align}
	& (\OTh^{-1} \La_\mu \OTh f) (\vec{0}) = \text{\{definition of $\OTh$ (\ref{eg:Th-def})\}} \nonumber \\
	& = (\La_\mu \OTh f) (\Th \vec{0})\nonumber \\ 
	& = (\La_\mu \OTh f) (\vec{0}) = \text{\{definition of $\La_\mu$ (\ref{eq:lambda-mu-def}) \}} \nonumber\\
	& = \mu( \OTh f)
	\label{eq:lambda-postcomposed}
	\end{align}
\end{proof}
Thus, also in the case of a non-linear, translation-covariant feature extractor, inversely transforming \emph{the feature maps} of a transformed image will not yield the same feature maps as for the original image. Instead, it is corresponds to extracting features from transformed image patches.

\subsection{Semi-locality}
To enable considering operators that output a constant for the input $f=0$, we define the concept of \emph{semi-locality}. A semi-local operator is an extension of the concept of an operator with \emph{compact support}. It similarly implies that the output will only be affected by the values in a bounded region of the input image. However, that output does not necessarily have to be 0 for the input $f=0$ (but translation covariance implies that it must output a constant). 

\begin{defi}\label{def:semi-local}
	We will say that $\La$ is  \textbf{semi-local} if there exists a radius $r(\La)$ such that for any point $p$ and any two functions $f_1$ and $f_2$ which agree on the ball of radius $r(\La)$ around a point $p$  we have $\La f_1 (p)= \La f_2 (p)$. 
\end{defi}

Semi-locality interacts well with translation covariance. 
\begin{lem}\label{lem:semiloc-transl}
	If $\La$ is translation covariant and semi-local with radius $r(\La)$ and $f_1$ and $f_2$ agree on a ball of radius $r+r(\La)$ around $p$, then $\La f_1$ and $\La f_2$ agree on a ball of radius $r$ around $p$.
\end{lem}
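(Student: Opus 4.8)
The plan is to deduce this directly from semi-locality (Definition~\ref{def:semi-local}), using translation covariance only to the extent that the base point of the semi-locality condition may be moved freely. Fix an arbitrary point $q$ in the ball of radius $r$ around $p$, so that $|q-p|\le r$; since $q$ is arbitrary, it suffices to show $\La f_1(q)=\La f_2(q)$.

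First I would observe that, by the triangle inequality, the ball of radius $r(\La)$ around $q$ is contained in the ball of radius $r+r(\La)$ around $p$: if $|y-q|\le r(\La)$ then $|y-p|\le|y-q|+|q-p|\le r(\La)+r$. Since $f_1$ and $f_2$ agree on the ball of radius $r+r(\La)$ around $p$ by hypothesis, they therefore agree on the ball of radius $r(\La)$ around $q$. Next I would apply the semi-locality condition with base point $q$: because $f_1$ and $f_2$ agree on the ball of radius $r(\La)$ around $q$, Definition~\ref{def:semi-local} gives $\La f_1(q)=\La f_2(q)$. As $q$ ranged over the whole ball of radius $r$ around $p$, this is exactly the assertion that $\La f_1$ and $\La f_2$ agree on that ball.

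The step that does the real work is the ball inclusion; everything else is bookkeeping, and I expect no genuine obstacle here. The only point worth pinning down is where translation covariance actually enters. The definition of semi-locality as stated already quantifies over every base point, so the argument above needs nothing further. If one instead only assumes the semi-local condition at a single base point (say the origin), then translation covariance is precisely what transports it to $q$: writing $\La g(q)=(\calD_{-q}\La g)(\vec{0})=(\La\calD_{-q}g)(\vec{0})$, the functions $\calD_{-q}f_1$ and $\calD_{-q}f_2$ agree on the ball of radius $r(\La)$ around the origin exactly when $f_1,f_2$ agree on the ball of radius $r(\La)$ around $q$, and the origin-based semi-locality then yields $\La f_1(q)=\La f_2(q)$. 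Either way, the content of the lemma is the passage from agreement on the enlarged ball of radius $r+r(\La)$ to agreement of the outputs on the smaller ball of radius $r$.
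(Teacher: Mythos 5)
Your proof is correct, and it takes a route that differs from the paper's in a small but meaningful way. The paper fixes the base point $p$ and instead translates the inputs: for each $x$ in the ball of radius $r$ around the origin it notes that $\calD_x f_1$ and $\calD_x f_2$ agree on the ball of radius $r(\La)$ around $p$, applies semi-locality at $p$ to get $(\La \calD_x f_1)(p) = (\La \calD_x f_2)(p)$, and then uses translation covariance $\La \calD_x = \calD_x \La$ to rewrite this as $(\La f_1)(p-x) = (\La f_2)(p-x)$. You instead keep the functions fixed and move the evaluation point $q$, invoking the semi-locality definition directly at $q$; since Definition \ref{def:semi-local} quantifies over every base point, this needs nothing beyond the triangle inequality. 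The two arguments are equally short, but yours makes visible that, under the paper's definition of semi-locality, the translation-covariance hypothesis in this lemma is actually redundant---the conclusion holds for any semi-local operator. Your closing remark is also accurate: translation covariance is exactly what one would need if semi-locality were postulated only at a single base point, and that is in effect how the paper's proof is organized (semi-locality is used only at $p$, covariance transports everything else). So your version is marginally more general and more self-contained, while the paper's version keeps the covariance-plus-locality interplay explicit, which is the pattern reused in the subsequent multi-layer arguments.
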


\begin{proof}
	For any $x$ a in ball of radius $r$ around the origin, the functions $\calD_x f_1$ and $\calD_x f_2$ agree on a ball of radius $r(\La)$ around $p$; by definition of semi-locality, this means $(\La D_x f_1)(p)=(\La \calD_x f_2) (p)$, or $(\La  f_1)(p-x)=(\La  f_2) (p-x)$, which is what we wanted.
\end{proof}



Semi-locality is unaffected by conjugation with $\OTh$.

\begin{lem}\label{lemma:comp-loc}
	If $\La$ is semi-local, then so is $\OTh^{-1} \La \OTh$.
\end{lem}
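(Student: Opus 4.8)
The plan is to unwind the definitions and exhibit an explicit semi-locality radius for $\OTh^{-1}\La\OTh$, obtained by scaling $r(\La)$ by the operator norm of $\Th^{-1}$. First I would compute the effect of the conjugation on a single evaluation point: using the definition of $\OTh$ in (\ref{eg:Th-def}), for any point $p$ one has $(\OTh^{-1}\La\OTh f)(p) = (\La\,\OTh f)(\Th p)$. Thus the value of the conjugated operator at $p$ is the value of $\La$, applied to the transformed input $\OTh f$, read off at the transformed point $\Th p$.

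Next I would invoke semi-locality of $\La$: the quantity $(\La\,\OTh f)(\Th p)$ depends only on the restriction of $\OTh f$ to the ball $B(\Th p, r(\La))$ of radius $r(\La)$ around $\Th p$. Since $(\OTh f)(x) = f(\Th^{-1}x)$, controlling $\OTh f$ on that ball amounts to controlling $f$ on the image of that ball under $\Th^{-1}$. The key geometric step is that this image $\Th^{-1}\big(B(\Th p, r(\La))\big) = \{\, p + \Th^{-1}v : |v|\le r(\La)\,\}$ is in general an ellipsoid rather than a ball, but is nonetheless contained in the genuine ball of radius $\|\Th^{-1}\|\, r(\La)$ around $p$, where $\|\Th^{-1}\|$ is the operator norm of $\Th^{-1}$; this follows at once from $|\Th^{-1}v| \le \|\Th^{-1}\|\,|v|$.

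Putting these together, I would set $R := \|\Th^{-1}\|\, r(\La)$ and argue as follows. If $f_1$ and $f_2$ agree on the ball of radius $R$ around $p$, then $\OTh f_1$ and $\OTh f_2$ agree on $B(\Th p, r(\La))$, because every point $x$ of that ball is sent by $\Th^{-1}$ into the ball of radius $R$ around $p$, so $f_1(\Th^{-1}x) = f_2(\Th^{-1}x)$. Semi-locality of $\La$ then gives $(\La\,\OTh f_1)(\Th p) = (\La\,\OTh f_2)(\Th p)$, i.e. $(\OTh^{-1}\La\OTh f_1)(p) = (\OTh^{-1}\La\OTh f_2)(p)$. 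This exhibits $R$ as a valid semi-locality radius for $\OTh^{-1}\La\OTh$ and completes the proof.

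I expect the main, and essentially only, obstacle to be the ellipsoid-into-ball containment: one must resist assuming that $\Th^{-1}$ preserves balls, and instead absorb the anisotropic distortion of a general linear map into the single operator-norm factor $\|\Th^{-1}\|$. Everything else is a direct substitution of the definitions of $\OTh$ and of semi-locality, entirely parallel to the bookkeeping already used in Lemma~\ref{lem:semiloc-transl}.
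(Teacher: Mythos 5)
Your proposal is correct and follows essentially the same route as the paper's proof: both choose the radius $\|\Th^{-1}\|\,r(\La)$ (the paper writes $k=\max_{|v|=1}|\Th^{-1}(v)|$ and sets $r=k\,r(\La)$) and then verify that agreement of $f_1,f_2$ on the ball of radius $r$ around $p$ forces agreement of $\OTh f_1,\OTh f_2$ on the ball of radius $r(\La)$ around $\Th p$, hence equality of the conjugated operators at $p$. Your explicit remark that $\Th^{-1}$ sends the ball around $\Th p$ to an ellipsoid contained in the larger ball is exactly the geometric fact the paper uses implicitly, so there is no substantive difference.
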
 
\begin{proof}Let  
$k=\max_{|v|=1} |\Th^{-1}(v)|$ be the operator norm of $\Th^{-1}$. Set $r=k r(\La)$. We claim $\OTh^{-1} \La \OTh$ is semilocal with radius $r$. Indeed, if $f_1$ and $f_2$ agree on a ball of radius $r$ around $p$, then $\OTh f_1$ and $\OTh f_2$ agree on ball of radius $r(\La)$ around $T_h p$, and so do the values $ (\La \OTh f_1)(T_h p)$ and $ (\La \OTh f_2)(T_h p)$ agree. This means $\OTh^{-1} (\La \OTh f_1)( p)=\OTh^{-1} (\La \OTh f_2)( p)$ as wanted.
	
\end{proof}

Convolutions with compactly-supported $\la$ are semi-local. 

\begin{lem}\label{lemma:conv-loc}
	If $\la$ is supported on a ball of radius $r(\la)$ around the origin, then $\La_\la$ is semi-local with radius $r(\la)$.
\end{lem}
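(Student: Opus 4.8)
The plan is to work directly from the convolution integral and to exploit that the compact support of $\la$ confines the integration to a small ball. The crucial choice is to use the form $(\La_\la f)(p) = \int_{\R^N} \la(y)\, f(p-y)\, dy$ rather than the alternative $\int f(y)\la(p-y)\,dy$, so that the integration variable $y$ appears as the argument of $\la$. This way the support hypothesis on $\la$ bounds the relevant values of $y$ directly, instead of constraining the (a priori unknown) argument of $f$.

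Concretely, I would first note that, since $\la$ vanishes outside the ball of radius $r(\la)$ around the origin, the integrand $\la(y)\,f(p-y)$ is zero whenever $|y| > r(\la)$, so that $(\La_\la f)(p) = \int_{|y| \le r(\la)} \la(y)\, f(p-y)\, dy$. Next, for any $y$ with $|y| \le r(\la)$, the point $p-y$ lies in the ball of radius $r(\la)$ around $p$, because $|(p-y) - p| = |y| \le r(\la)$. By hypothesis $f_1$ and $f_2$ agree on exactly this ball, hence $f_1(p-y) = f_2(p-y)$ for every $y$ in the domain of integration. The two integrands therefore coincide pointwise on $\{\,|y| \le r(\la)\,\}$ (and both vanish outside it), so the integrals are equal and $(\La_\la f_1)(p) = (\La_\la f_2)(p)$. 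Matching this against Definition \ref{def:semi-local} with $r(\La_\la) = r(\la)$ completes the argument.

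There is no genuine obstacle here: the statement merely records the locality of convolution against a compactly supported kernel. The only point requiring a moment's care is the bookkeeping that selects the correct form of the convolution integral, so that the support of $\la$ (and not any assumption on $f$) controls the region of integration; once that form is fixed, the containment $p-y \in B(p, r(\la))$ is immediate and the agreement of $f_1$ and $f_2$ transfers directly to the integrands.
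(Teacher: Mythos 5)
Your proof is correct and follows essentially the same route as the paper's own: both use the form $(\La_\la f)(p)=\int f(p-y)\,\la(y)\,dy$, restrict the integration to the support ball of $\la$, and observe that the integrand then only involves values of $f$ on the ball of radius $r(\la)$ around $p$. The only difference is that you spell out the containment $p-y\in B(p,r(\la))$ explicitly, which the paper leaves implicit.
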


\begin{proof} If $\la$ is supported on a ball $B$ of radius $r(\la)$ then we have
	
	$$\La f (p)=\int \f(p-y) \la(y) dy= \int_{B} \f(p-y) \la(y) dy.$$
	Thus, if $f_1$ and $f_2$ agree on the ball of radius $r(\la)$ around $p$, then the integrals for $f_1$ and $f_2$ agree, i.e. $\La f_1 (p)=\La f_2 (p)$.
\end{proof}

This simple Lemma \ref{lemma:conv-loc} is the basis of the following proposition.

\begin{pro} \label{prop:CNN-loc}
	A CNN as defined in Section \ref{sec:CNN} is a semi-local operator.
\end{pro}

\begin{proof} [Sketch]
	Observe that if two functions  agree on a ball of radius $R$, then after convolution with a kernel supported on a ball of radius $r$ the results agree at least on a ball of radius $R-r$. Applying a pointwise non-linearity $\sigma$ to each of the values does not affect this equality. Thus, if the radius $R$ is large enough, then after multiple convolution layers, the results are guaranteed to agree on some non-empty ball, which is what we wanted to prove.
	A more detailed proof (using induction and Lemmas  \ref{lem:semiloc-transl} and \ref{lemma:conv-loc}) is given in Appendix \ref{app:prop-CNN-proof}.
	
\end{proof}

\subsection{Covariance of the operator in the non-linear case}
We, now consider the conditions on $\mu$ or $f$ that are required for it to be possible to undo a precomposing with $\OTh$ after feature extraction  by postcomposing with $\OTh^{-1}$.

\begin{lem}\label{multi-equiv}
	Recall from Lemma \ref{lem:conj} that  $\mu_h (f)=\mu (\OTh f)$.	Then, for a general non-linear translation-covariant feature extractor $\La_\mu$ generated by $\mu$ (\ref{eq:lambda-mu-def}) 
	\begin{equation}
	(\OTh^{-1} \La_\mu \OTh f)  =  (\La_{\mu} f)   
	\end{equation}	
	implies
	\begin{equation}
	\mu = \mu_h,
	\end{equation}	
	i.e. that $\mu$ must be invariant to $\OTh$.
\end{lem}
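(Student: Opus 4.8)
The plan is to exploit the fact, already established in the setup around equations~(\ref{eq:mu-def}) and~(\ref{eq:lambda-mu-def}), that a translation-covariant operator is completely determined by its generator, i.e. by its value at the origin $\vec{0}$. The hypothesis $\OTh^{-1} \La_\mu \OTh f = \La_{\mu} f$ is an equality of feature maps, holding for every spatial position $x$; the idea is simply to read off the two generators from it by evaluating both sides at the single point $x=\vec{0}$, and then observe that the resulting identity of generators is exactly the claimed invariance $\mu=\mu_h$. So no genuine computation is needed beyond invoking the two preceding results.

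Concretely, I would evaluate the hypothesis at $\vec{0}$. For the left-hand side, Lemma~\ref{lem:conj} identifies the generator of $\OTh^{-1}\La_\mu\OTh$ as $\mu_h$, so that
\begin{equation}
(\OTh^{-1}\La_\mu\OTh f)(\vec{0}) = \mu_h(f) = \mu(\OTh f).
\end{equation}
For the right-hand side, the definition of the generator~(\ref{eq:mu-def}), together with the fact that~(\ref{eq:mu-def}) and~(\ref{eq:lambda-mu-def}) are mutually inverse (so that $\mu_{\La_\mu}=\mu$), gives
\begin{equation}
(\La_\mu f)(\vec{0}) = \mu(f).
\end{equation}
Since the two sides agree as functions, they agree at $\vec{0}$, and hence $\mu(\OTh f)=\mu(f)$ for every $f\in V$. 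This is precisely $\mu=\mu_h$, the invariance of the generator under $\OTh$.

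There is essentially no hard step here: the real work was front-loaded into the generator correspondence and into Lemma~\ref{lem:conj}. The only point requiring a little care is the logical one, namely that evaluating at the origin genuinely recovers the full generators of both operators rather than just one sampled value; this is legitimate precisely because both $\La_\mu$ and its conjugate $\OTh^{-1}\La_\mu\OTh$ are translation covariant (the latter by the unnamed lemma preceding Lemma~\ref{lem:conj}), so each is reconstructed from its origin value via~(\ref{eq:lambda-mu-def}). One could equivalently phrase the argument operator-theoretically: conjugation preserves translation covariance, so $\OTh^{-1}\La_\mu\OTh=\La_{\mu_h}$, and the bijectivity of $\mu\mapsto\La_\mu$ turns the operator equality $\La_{\mu_h}=\La_\mu$ into $\mu_h=\mu$. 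I would present the direct evaluation at $\vec{0}$, as it is the most transparent and avoids invoking injectivity explicitly.
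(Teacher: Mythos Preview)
Your proof is correct and matches the paper's own argument, which simply states that the result is immediate from Lemma~\ref{lem:conj}. You have spelled out the evaluation at $\vec{0}$ in more detail than the paper does, but the underlying reasoning is identical.
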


\begin{proof}
	This is immediate from Lemma \ref{lem:conj}.
\end{proof}

Thus, for an inverse spatial transformation of the feature maps of a transformed image to render the same feature maps as for the original image, either $f$ must be invariant to $\OTh$ around every image point (which implies $f$ is constant) or
the feature extractor (i.e. the generator) must be invariant to the relevant transformation group.

\begin{defi}
We say that a functional $\La$ is \textbf{non-constant} if there exists an $f$ such that $\La(f)\neq \La (0)$.

\end{defi}

Observe that if the functional is semi-local, we can take $f$ to be compactly supported. A translation-covariant $\La$ is non-constant precisely when its generator $\mu$ is non-constant, i.e. there exists $f$ such that $\mu(f)\neq \mu (0)$ (Proof: take $f$ given by non-constancy of $\La$; then there is some $x$ such that $(\La f) (x) \neq (\La 0) (x) $, and $\hat{f}=\calD_x f$ has $\mu(\hat{f})\neq \mu (0)$).

\subsection{Using $\OTg=\OTh^{-1}$ is still a necessary condition to align feature maps}

The following lemma is the key to seeing that also in the non-linear case, a necessary condition for being able to align the feature maps of a transformed image with those of it’s original is using $\OTh^{-1}$. It is equivalent to Lemma \ref{lemma:noRot} in the single-layer case.

\begin{lem}
	\label{lemma:noRot-multi}
	If for two semi-local translation-covariant non-constant operators we have 
	$\La_{\mu_1}=\OTh  \La_{\mu_2}$    then $\OTh=\Id$. 
\end{lem}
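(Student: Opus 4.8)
The plan is to mirror the single-layer argument of Lemma~\ref{lemma:noRot}, replacing the roles played there by ``$\la \ne 0$'' and ``convolution with $0$ gives $0$'' with the notions of non-constancy and semi-locality. The essential idea is unchanged: I would test the hypothesised identity $\La_{\mu_1} = \OTh \La_{\mu_2}$ on a compactly supported bump that has been translated far from the origin, conjugate by a translation so as to read off the generators at $\vec 0$, and exhibit a disagreement between the two sides whenever $\Th \ne \Id$. The one genuinely new feature, compared with the linear case, is that $\La_{\mu_2}$ need not send the zero function to $0$; so before deriving a contradiction I first have to pin down the constants $\mu_1(0)$ and $\mu_2(0)$ and show that they coincide.

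Concretely, let $r_1 = r(\La_{\mu_1})$ and $r_2 = r(\La_{\mu_2})$ be the semi-locality radii. Since $\La_{\mu_1}$ is non-constant and semi-local, I can (as noted after Lemma~\ref{multi-equiv}) pick a \emph{compactly supported} $f$, say supported in the ball of radius $r(f)$ about the origin, with $\mu_1(f) \neq \mu_1(0)$. Assume, for contradiction, that $\Th \ne \Id$. Then $\Th^{-1} - \Id$ is a nonzero linear map, so I can choose $p$ with $|\Th^{-1}p - p| > r(f) + r_2$. Set $\hat f = \calD_{-p} f$. On the left I would compute, using translation covariance (Definition~\ref{def:translation-covariance}), that $(\calD_p \La_{\mu_1} \hat f)(\vec 0) = (\La_{\mu_1} \calD_p \hat f)(\vec 0) = (\La_{\mu_1} f)(\vec 0) = \mu_1(f)$, since $\calD_p \hat f = f$. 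On the right, using the commutation relation (\ref{eq:commutator2}) of Lemma~\ref{lem:conv-lin}, translation covariance of $\La_{\mu_2}$, and the definition (\ref{eg:Th-def}) of $\OTh$ (noting $\Th^{-1}\vec 0 = \vec 0$), I would reduce $(\calD_p \OTh \La_{\mu_2} \hat f)(\vec 0)$ to $(\La_{\mu_2} \tilde f)(\vec 0) = \mu_2(\tilde f)$, where $\tilde f = \calD_{\Th^{-1}p - p} f$ is supported in the ball of radius $r(f)$ about $\Th^{-1}p - p$.

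It then remains to evaluate both constants. Applying the hypothesis to the zero function gives $\La_{\mu_1} 0 = \OTh \La_{\mu_2} 0$; but translation covariance forces each $\La_{\mu_i} 0$ to be a constant function (equal to $\mu_i(0)$), and $\OTh$ fixes constants, so $\mu_1(0) = \mu_2(0)$. Moreover, by the choice of $p$ the support of $\tilde f$ is disjoint from the ball of radius $r_2$ about the origin, so $\tilde f$ and the zero function agree there; semi-locality of $\La_{\mu_2}$ (Definition~\ref{def:semi-local}) then yields $\mu_2(\tilde f) = (\La_{\mu_2} 0)(\vec 0) = \mu_2(0)$. Equating the two sides of the hypothesis evaluated as above gives $\mu_1(f) = \mu_2(0) = \mu_1(0)$, contradicting $\mu_1(f) \ne \mu_1(0)$; hence $\Th = \Id$. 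I expect the only delicate point to be this constant bookkeeping, namely ensuring that ``far away'' inputs are read by $\La_{\mu_2}$ as the \emph{zero function's constant output} rather than literally as $0$, together with the check that $\mu_1(0) = \mu_2(0)$ so that the final contradiction is with a genuine inequality; the geometric separation of supports and the commutation manipulations should be routine once the single-layer proof is in hand.
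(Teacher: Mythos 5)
Your proposal is correct and follows essentially the same route as the paper's own proof: pin down $\mu_1(0)=\mu_2(0)$ via the zero function, pick a compactly supported $f$ with $\mu_1(f)\neq\mu_1(0)$, translate it far away so that the separation $|\Th^{-1}p-p|>r(f)+r(\La_{\mu_2})$ makes semi-locality read the right-hand side as $\mu_2(0)$, and conclude by contradiction. The only cosmetic differences are that the paper establishes $\mu_1(0)=\mu_2(0)$ at the start rather than at the end, and uses a slightly more generous margin ($+1$) in choosing $p$, neither of which matters.
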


\begin{proof}

	
	This is a more abstract version of the proof of Lemma \ref{lemma:noRot}. 
	First of all, applying $\La_{\mu_1}=\OTh  \La_{\mu_2}$ to the zero function we  get $\La_{\mu_1}0=\OTh  \La_{\mu_2} 0=\La_{\mu_2} 0$, and evaluating at location $\vec{0}$ obtain $\mu_1(0)=\mu_2(0)$.
	
	Now, take compactly supported $f$ with $\mu_1(f)\neq \mu_1(0)$. Suppose $f$ is supported in a ball of radius $r(f)$. 

	If $T_h\neq Id$, we can pick $p$ such that $|T^{-1}_h(p)-p| >r(\La_{\mu_2})+r(f)+1 $ (where $r(\La_{\mu_2})$ is  as in Definition \ref{def:semi-local}).

	Then,  by  (\ref{eq:lambda-mu-def}) we have	
	\begin{align}( \calD_{p} \La_{\mu_1} \calD_{-p} f(x)) (\vec{0})&= (\La_{\mu_1}  \calD_{p} \calD_{-p} f (x))(\vec{0}) \nonumber\\ &= (\La_{\mu_1} f (x))(\vec{0})=  \mu_1 (f (x))\neq \mu_1(0)		\end{align}

	but 
	\begin{align}( \calD_{p} \OTh  \La_{\mu_2} \calD_{-p} f (x))(\vec{0})=&\nonumber\\ 
	=&( \OTh  \calD_{T_h^{-1} (p)} \La_{\mu_2}  \calD_{-p} f (x))( \vec{0}) = \nonumber\\
	=&( \OTh   \La_{\mu_2} \calD_{T_h^{-1} (p)-p}  f (x))(\vec{0}) =\nonumber\\
	=& ( \La_{\mu_2}  D_{T_h^{-1}(p)-p} f )((T_h^{-1}(\vec{0}) )=\nonumber\\
	=&( \La_{\mu_2}  D_{T_h^{-1}(p)-p} f )(\vec{0})=\nonumber\\
	&\hspace{2cm}=(\La_{\mu_2}0)(\vec{0})=\mu_2(0)=\mu_1(0),
	\end{align}
	where the third-to-last equality (to $(\La_{\mu_2}0)(\vec{0})$) holds for the following reason: since $f(x)$ is supported on ball of radius $r(f)$ around the origin,   $D_{T_h^{-1}(p)-p} f( x)$ is supported on a ball of radius $ r(f)$ around $T_h^{-1}(p)-p$ which is entirely outside the ball of radius $r(\La_{\mu_2})$ around the origin. This means $\La_{\mu_2}$ applied to $D_{T_h^{-1}(p)-p} f( x)$ evaluated at the origin is equal to $\La_{\mu_2} 0$ evaluated at the origin by Definition \ref{def:semi-local} of semi-locality.

\end{proof}

\subsection{Conclusions in the multi-layer case}
We can now conclude also for the non-linear case that the  only  admissible  operator  to  align feature  maps  is $\OTh^{-1}$ and for alignment to be possible the extracted non-linear features must themselves be invariant to the relevant transformation. 


 \begin{pro}
	If $ \OTg \La_\mu \OTh =\La_\mu $, this implies that $\OTg = \OTh^{-1} $ and that $\mu(\OTh f ) = \mu(f).$
\end{pro}
	
\begin{proof}
Writing $\OTg=\OTH (\OTh)^{-1}$ and $\mu_h=\mu(\OTh f)$, we, as for the single-layer case, see that
	\begin{equation}  \OTg \La_\mu \OTh=\OTH (\OTh)^{-1}\La_\mu \OTh=\OTH \La_{\mu_h}.\end{equation}	
Suppose we do have 
\begin{equation}  \OTg \La_\mu \OTh =\La_\mu.\end{equation}
Then, by Lemma \ref{lemma:noRot-multi} (which is applicable because of Lemma \ref{lemma:comp-loc}) we must have $T_H=\Id$ and $\OTg =\OTh^{-1} $. Further, by Lemma~\ref{multi-equiv} we must have 
\begin{equation}\label{eq:mu-invar}
\mu(\OTh f ) = \mu(f)   
\end{equation}  if the equality (\ref{eq:post-pre-compose}) should hold for all $f$.
\end{proof}
Thus, the combined non-linear transformation must be computed from \emph{transformation invariant} non-linear operators $\mu$.
Since it is not possible to give explicit conditions for individual filters (e.g. symmetries implies that the same function can be implemented by more than one set of filters), we will instead investigate under which conditions invariant non-linear features $\mu_\La$ (\ref{eq:mu-def}) exist. 


\begin{pro}\label{pro:no-inv-has-contracting-direction}
If not all eigenvalues (real or complex) of $\Th$ have absolute value equal to 1, then for a continuous, semi-local, translation-covariant operator $\La$, equation (\ref{eq:mu-invar}) implies $\mu(f)=\mu(0)$ i.e. that $\La$ is the trivial operator that outputs the same constant signal for all inputs.  
\end{pro}

\begin{proof}

    We consider the case in which $T_h$ has at least one eigenvalue of absolute value bigger than $1$ (i.e. $T_h^{-1}$ has at least one eigenvalue of absolute value less than $1$). The case in which $T_h$ has at least one eigenvalue of absolute value less than $1$ follows by noting that invariance with respect to $\OTh$ is the same as invariance with respect to $\OTh^{-1}$.

    First, observe that for a translation-covariant operator, continuity of $\La$ implies continuity of $\mu$. Now, let $\La$ be semi-local with radius $r(\La)$. Let $\chi$ be the characteristic function of the ball of radius $r(\La)$. Then 
    \begin{equation}\label{eq:mu-loc}
        \mu(g)=\mu (\chi g )
    \end{equation}
    for any $g$ in $V=L^{1}_{loc}$. 
	
	We now decompose $\mathbb{R}^n$ into generalized eigenspaces of $\Th^{-1}$, $\mathbb{R}^n=E^{+}\oplus E^{0} \oplus E^{-}$ as in Section 3.3.3 in  \cite{hasselblatt2003first}.
	The condition that at least one eigenvalue of $\Th^{-1}$ have absolute value less than 1 means that $E^{-}$ is non-trivial.  By Corollary 3.3.7 in \cite{hasselblatt2003first},  when restricted to a non-trivial subspace $E^{-}\subseteq \mathbb{R}^n$ the operator $\Th^{-1}$ is eventually contracting (see Definition 2.6.11 ibid.), so that by  Corollary 2.6.13  and Lemma 3.3.6 ibid. under the iterates of $\Th^{-1}$ all points of $E^{-}$ converge to the origin with exponential speed. This implies that the points of $\mathbb{R}^n$ converge to points in the proper subspace $S=E^{+}\oplus E^{0}$. 
	
	Now starting with any $f$  in $L^{1}_{loc}$, and denoting by $B$ the ball of radius  $r(\La)$ around the origin, the functions $f_n =\chi  \OTh^n (\chi f) $  will eventually have supports lying in arbitrarily small neigbourhood of $S\cap B$, i.e. on a set of arbitrarily small measure. 
	If $\chi f$ is bounded, this implies that $f_n$ converge to the zero function in $L^{1}_{loc}$. Then, by continuity of $\mu$, the values $\mu(f_n)$ converge to $\mu(0)$. On the other hand, by semi-locality (\ref{eq:mu-loc}) and invariance (\ref{eq:mu-invar}) we get
	
	\begin{equation}\label{eq:mu-const}
	\mu(f_n)=\mu(\chi  \OTh^n (\chi f))=\mu(  \OTh^n (\chi f))=\mu(\chi f)=\mu(f).
	\end{equation}
	
	We conclude $\mu(f)=\mu(0)$ for any $f$ in $L^{1}_{loc}$ with bounded $\chi f$. Since any $f$ in $L^{1}_{loc}$ can be approximated arbitrarily well by functions $g_i$ with  bounded $\chi g_i$, and $\mu$ is continuous, we conclude that $\mu(f)=\lim \mu(g_i)= \mu(0)$ for all $f$.

\end{proof}

In the 2D case we can enhance this further to give conclusions similar to those of Proposition \ref{pro:single-layer}.

\begin{pro}\label{pro:multilayer-2D}
		The equality (\ref{eq:mu-invar}) can hold  
		for a continuous, semi-local, translation-covariant operator $\La$ only if $\Th$ is conjugate to some rotation or, if $T_h$ is orientation reversing, a reflection matrix.
	\end{pro}

	\begin{proof}
	    As in the proof of Proposition \ref{pro:single-layer}, studying the Jordan form of $\Th$ shows that the only cases not covered by Proposition \ref{pro:no-inv-has-contracting-direction} are ones when $\Th$ is conjugate to $\begin{pmatrix}1&1\\0&1
	    \end{pmatrix}$ or $\begin{pmatrix}-1&1\\0&-1
	    \end{pmatrix}$ (this is the case of shear transformations). In this case $\Th$ does not have iterates that contract $\mathbb{R}^2$ to a proper subspace, but the intersection of images of $B$ under $\Th$ with $B$ still lie arbitrarily close to a 1-D subspace. Then the same proof as in Proposition \ref{pro:no-inv-has-contracting-direction} yields the result.
	\end{proof}

\begin{rem}
 In the higher dimensional case, one can perform very similar analysis based on Jordan form of $\Th$ and extend the proof of Proposition \ref{pro:multilayer-2D} to conclude that invariance with respect to $\Th$ can only be obtained if $\Th$ is conjugate to an orthogonal matrix.
\end{rem}
  Thus, we reach a very similar conclusion as for the single-layer case. To enable aligning feature maps of a transformed image with those of its original, the non-linear  features $\mu_\La$ 
 (\ref{eq:mu-def}) must be invariant to the relevant transformation. 
 Furthermore, Propositions \ref{pro:no-inv-has-contracting-direction} and \ref{pro:multilayer-2D} show that there \emph{does not exist} any such invariant non-linear features $\mu_\La$  
 unless $\Th$ corresponds to a rotation or a reflection (or in higher dimensions an orthogonal) matrix in some coordinate system. In other words, there does not exist any such features invariant to affine transformations, scaling transformations or shears.
Since the restricted covariance relation (\ref{eq:covariance2}) cannot hold for these transformations, purely spatial transformations of feature maps \emph{cannot enable affine- scale- or shear-invariant recognition}.
 These conclusions hold for any continuous, semi-local, translation-covariant operator, which in particular includes $\La$ given by a CNN (\ref{eq:CNN}) with Lipschitz continuous non-linearities $\sigma_i$. 







\section{Summary and conclusions}
Using elementary analysis, we have presented a proof that spatial transformations cannot, in general, align CNN feature maps of a transformed image to match those of its original.
We have showed that, in order for feature extraction and spatial transformations to commute for translation-covariant, semi-local operators (such as CNNs), the features computed by the network must themselves be \emph{invariant} to the relevant image transformation.
 Since this is not generally the case, applying the inverse spatial transformation to a feature map extracted from a transformed image will typically \emph{not render the same feature map} as for the original image. This can be contrasted with the case of pure translations, where the translation covariance of a CNN implies that a translation of the input indeed corresponds to a translation of the feature maps.
 
 Furthermore, we have shown that features computed with convolutional filters of compact support and Lipschitz continuous non-linearities (such as would be the case for a standard CNN) can only be made invariant to transformations that \emph{correspond to reflections or rotations} in some basis.
In other words, there does not exist any such features invariant to affine transformations, scaling transformations or shear transformations.
Thus, spatial transformations of feature maps \emph{cannot enable affine-, scale-, or shear-invariant recognition} for CNNs or indeed any continuous, semi-local, translation-covariant feature extractor.



Our results imply that methods based on spatial  transformations  of  CNN  feature  maps  or  filters (e.g. \cite{HeZhaXia-ECCV2014,yuarXiv2015,DaiQiXio-arXiv2017,JadSimZisKav-NIPS2015}) is not a replacement for image alignment of the input. In particular, transforming feature maps \emph{cannot enable invariant recognition for general affine transformations, scaling transformations or shear transformations}, and it will only enable rotation-invariant recognition for networks with learnt or hardcoded {rotation-invariant filters/features}. 

\begin{appendix}
	\section{Appendix}

	\subsection{Proof that a single convolutional layer is translation covariant}
	\label{app:single-layer-covariance}

	\begin{pro}
		A single-layer continuous CNN 
		 (\ref{eq:single-layer-cnn}) is translation covariant: 
	\begin{equation}
	    \calD_{\delta}\La_\la =\La_\la \calD_{\delta}=\La_{\calD_{\delta}\la}.
	\end{equation}
	\end{pro}
	
	\begin{proof} We compute
		
		\begin{align}(\calD_{\delta}\La_\la f)(x)=(\La_\la f)(x-\delta)=\int_{\R^N} f(y)\la(x-\delta-y)  dy=(\La_{D_{\delta}\la} f)(x)\end{align}
		
		and using the change of variables $u= y-\delta$
		
		\begin{align}
		(\La_\la \calD_{\delta} f)(x) &= \int_{\R^N} f( y-\delta)\la(x-y) dy = \int_{\R^N} f(u)\la(x-\delta-u)  du=\nonumber\\
		& \hspace{6cm}=(\La_{D_{\delta}\la} f)(x).
		\end{align}
		
	\end{proof}

	\subsection{Proof that CNNs are semi-local and translation covariant}
	\label{app:prop-CNN-proof}

	Recall Propositions  \ref{prop:CNN-covar} and \ref{prop:CNN-loc}:
	
	\begin{pro}
		A multi-layer continuous CNN, as defined in Section \ref{sec:CNN}, is a translation-covariant semi-local operator.
	\end{pro}

	\begin{proof}

		The proof is inductive and is based on  (\ref{eq:CNN}) which we copy here for convenience:
		
		\begin{multline}\label{eq:CNN-App}
		(\Lambda^{(i)} f)_c (x) =  \sigma_i \left( \sum_{m=1}^{M_{i-1}} \int_{y \in \R^N } (\Lambda^{(i-1)}f)_m (x-y)\, \lambda^{(i)}_{m,c}(y) \, dy + b_{i,c}
		\right) 
		\end{multline}
		
		We will prove that $ (\Lambda^{(i)} f)_c$ in (\ref{eq:CNN-App}) are translation covariant and  semi-local  by induction on $i$.
		The base case when $i=0$ and $\Lambda^{(i)} f=f$ is immediate.
		The induction step for translation covariance is immediate from the  formula (\ref{eq:CNN-App}) and the fact that a single convolution is translation covariant (Lemma \ref{lemma:conv-trans-covar}).
		
		For semi-locality, denoting, as before, for any convolution kernel $\lambda$ by $r(\lambda)$ radius such that $\lambda$ is supported on a ball of radius $r(\lambda)$, we pick  
		\begin{equation}r(\La^{i}_c) = \max_{m} [r(\La^{(i-1)}_m)+r(\lambda^{(i)}_{m, c})].
		\end{equation}
		Observe that since by the induction hypothesis, $\La^{(i-1)}_m$ is semi-local with radius $r(\La^{(i-1)}_m)$, if $f_1$ and $f_2$ agree on a  ball of radius $[r(\La^{(i-1)}_m)+r(\lambda^{(i)}_{m, c})]$ around some  $p$, then  by Lemma \ref{lem:semiloc-transl} the functions $(\Lambda^{(i-1)} f_1)_m(x)$  and  $(\Lambda^{(i-1)} f_2)_m(x)$  agree over the ball $B$ of radius   $r(\lambda^{(i-1)}_{m, c})$ around  $p$, and we denote this common function on the ball by $f^{i-1}_m$.
		By Lemma \ref{lemma:conv-loc} the convolution integrals for the specific $m$ in formula (\ref{eq:CNN}) for $f_1$ and $f_2$ evaluated at $p$  are equal.
		Therefore, if $f_1$ and $f_2$ agree on a ball of radius $r(\La^{i}_c)$ around $p$ then the overall expressions computed by formula (\ref{eq:CNN}) for $f_1$ and $f_2$ at $p$ will be equal, which is exactly what we set out to prove.
		
		Finally, the non-linearity $\sigma_i$ applies the same function to values at all locations so does not affect either translation covariance, nor semilocality (the equality $(\La f_1)(p) = (\La f_2)(p)$ is preserved when applying a pointwise non-linearity).
	\end{proof}
	
\end{appendix}

\bibliographystyle{splncs}
\bibliography{bib/yjdeepl,bib/stn_extra}

\end{document}